\pgfplotsset{compat=1.14}
\definecolor{printable_1}{RGB}{70,137,102}
\definecolor{printable_2}{RGB}{255,176,59}
\definecolor{printable_3}{RGB}{142,40,0}
\definecolor{printable_4}{RGB}{0,0,0}
\newtheorem{theorem}           {Theorem}
\newtheorem{definition}        {Definition}
\DeclareMathOperator {\neuralpersistence}{NP}                 
\DeclareMathOperator {\normneuralpersistence}{\widetilde{NP}} 
\DeclareMathOperator {\meanneuralpersistence}{\overline{NP}}  
\DeclareMathOperator {\persistence}      {pers}               
\DeclareMathOperator {\weight}           {\varphi}            
\DeclarePairedDelimiter\floor{\lfloor}{\rfloor}
\newcommand{\betti}                  [1]{\ensuremath{\beta_{#1}}}
\newcommand{\diagram}                   {\mathcal{D}}
\newcommand{\norm}                   [1]{\left\lVert#1\right\rVert}
\newcommand{\real}                      {\mathds{R}}
\newcommand{\simplicialcomplex}         {\mathrm{K}}
\newcommand{\weights}                   {\mathcal{W}}
\newif\iffinal
\title{%
  Neural Persistence: A Complexity Measure for Deep Neural Networks Using Algebraic Topology
}
\author{%
Bastian Rieck$^{1,2, \dagger}$, Matteo Togninalli$^{1,2, \dagger}$, Christian Bock$^{1,2, \dagger}$,\\
\textbf{Michael Moor$^{1,2}$, Max Horn$^{1,2}$, Thomas Gumbsch$^{1,2}$, Karsten Borgwardt$^{1,2}$}\\
\small\textsc{$^{1}$Department of Biosystems Science and Engineering, ETH Zurich, Switzerland}\\
\small\textsc{$^{2}$SIB Swiss Institute of Bioinformatics, Switzerland}\\
\small{$^{\dagger}$These authors contributed equally}
}
\begin{document}

\maketitle

\begin{abstract}
  While many approaches to make neural networks more fathomable have been
  proposed, they are restricted to interrogating the network with input
  data.  Measures for characterizing and monitoring structural properties,
  however, have not been developed.  In this work, we propose \emph{neural
  persistence}, a complexity measure for neural network architectures
  based on topological data analysis on weighted stratified graphs.  To
  demonstrate the usefulness of our approach, we show that \emph{neural
  persistence} reflects best practices developed in the deep
  learning community such as dropout and batch normalization.  Moreover,
  we derive a neural persistence-based stopping criterion that shortens
  the training process while achieving comparable accuracies as early
  stopping based on validation loss.
\end{abstract}

\hypersetup{%
  pdftitle    = {%
    Neural Persistence: A Complexity Measure for Deep Neural Networks Using Algebraic Topology
  },
  pdfauthor   = {%
    Bastian Rieck, Matteo Togninalli, Christian Bock, Michael Moor, Max
    Horn, Thomas Gumbsch, and Karsten Borwardt
},
  pdfkeywords = {algebraic topology, deep learning, persistent homology, early stopping, complexity},
}

\section{Introduction}

The practical successes of deep learning in various fields such as
image processing~\citep{simonyan2014very,he2016deep,hu2017squeeze},
biomedicine~\citep{ching2018opportunities, rajpurkar2017chexnet, rajkomar2018scalable},
and language translation~\citep{bahdanau2014neural, sutskever2014sequence,wu2016google}
still outpace our theoretical understanding.
While hyperparameter adjustment strategies exist~\citep{Bengio12}, formal
measures for assessing the generalization capabilities of deep neural
networks have yet to be identified~\citep{zhang2016understanding}.
Previous approaches for improving theoretical and practical comprehension
focus on interrogating networks with input data. These methods
include
\begin{inparaenum}[i)]
  \item feature visualization of deep convolutional neural networks~\citep{zeiler2014visualizing, springenberg2014striving},
  \item sensitivity and relevance analysis of features~\citep{montavon2017methods},
  \item a descriptive analysis of the training process based on information theory~\citep{tishby2015deep, shwartz2017opening, saxe2018information, achille2017emergence}, and
  \item a statistical analysis of interactions of the learned weights~\citep{tsang2017detecting}.
\end{inparaenum}
Additionally, \citet{raghu2016expressive} develop
a measure of \emph{expressivity} of a neural network and use it to
explore the empirical success of batch normalization, as well as for the
definition of a new regularization method. They note that one key
challenge remains, namely to provide meaningful insights while maintaining
theoretical generality. 
This paper presents a method for elucidating neural networks in light of
both aspects.

We develop \emph{neural persistence}, a novel measure for characterizing
neural network structural complexity.
In doing so, we adopt a new perspective that integrates both network
weights and connectivity while not relying on interrogating networks
through input data.
Neural persistence builds on computational techniques from algebraic topology,
specifically topological data analysis~(TDA), which was already shown to be
beneficial for feature extraction in deep learning~\citep{hofer2017deep} and
describing the complexity of GAN sample spaces~\citep{Khrulkov18}.
More precisely, we rephrase deep networks with fully-connected layers
into the language of algebraic topology and develop a measure for
assessing the structural complexity of
\begin{inparaenum}[i)]
  \item individual layers, and
  \item the entire network.
\end{inparaenum}
In this work, we present the following contributions:
\begin{compactitem}[-]
  \item We introduce \emph{neural persistence}, a novel measure for
  characterizing the structural complexity of neural networks that can
  be efficiently computed.
  \item We prove its theoretical properties, such as upper and lower
  bounds, thereby arriving at a normalization for comparing neural
  networks of varying sizes.
  \item We demonstrate the practical utility of neural persistence in
  two scenarios:
  \begin{inparaenum}[i)]
    \item it correctly captures the benefits of dropout and batch
    normalization during the training process, and
    \item it can be easily used as a competitive early stopping criterion
    that does not require validation data.
  \end{inparaenum}
\end{compactitem}

\section{Background: Topological data analysis}\label{sec:background}

Topological data analysis~(TDA) recently emerged as a field that provides computational tools for
analysing complex data within a rigorous mathematical framework that is
based on \emph{algebraic topology}.
This paper uses persistent homology, a theory that was developed
to understand high-dimensional manifolds~\citep{Edelsbrunner02, Edelsbrunner10},
and has since been successfully employed in characterizing graphs~\citep{Sizemore17, Rieck18a},
finding relevant features in unstructured data~\citep{Lum13}, and analysing image manifolds~\citep{Carlsson08}.
This section gives a brief summary of the key concepts; please refer to
\citet{Edelsbrunner10} for an extensive introduction.

\paragraph{Simplicial homology}
%
The central object in algebraic topology is a simplicial
complex~$\simplicialcomplex$, i.e.\ a high-dimensional generalization of
a graph, which is typically used to describe complex objects such as
manifolds.
Various notions to describe the connectivity of $\simplicialcomplex$
exist, one of them being simplicial homology.
Briefly put, simplicial homology uses matrix reduction
algorithms~\citep{munkres2018elements} to derive a set of groups, the
homology groups, for a given simplicial complex $\simplicialcomplex$.
Homology groups describe topological features---colloquially also
referred to as holes---of a certain dimension~$d$, such as connected
components~($d=0$), tunnels~($d=1$), and voids~($d=2$).
The information from the $d$th homology group is summarized in a simple
complexity measure, the $d$th Betti number~$\betti{d}$, which merely
counts the number of $d$-dimensional features:
a circle, for example, has Betti numbers $(1,1)$, i.e.\ one connected
component and one tunnel, while a filled circle has Betti numbers
$(1,0)$, i.e.\ one connected component but no tunnel.
In the context of analysing simple feedforward neural networks for two
classes, \citet{Bianchini14} calculated bounds of Betti numbers of the
decision region belonging to the positive class, and were thus able to
show the implications of different activation functions.
These ideas were extended by \citet{guss2018characterizing} to obtain
a measure of the topological complexity of decision boundaries.

\paragraph{Persistent homology}
%
For the analysis of real-world data sets, however, Betti numbers turn
out to be of limited use because their representation is too coarse and
unstable.
This prompted the development of persistent homology.
Given a simplicial complex $\simplicialcomplex$ with an additional set
of weights $a_0 \leq a_1 \leq \dots \leq a_{m-1} \leq a_m$, which are
commonly thought to represent the idea of a scale, it is possible to put
$\simplicialcomplex$ in a filtration, i.e.\ a nested sequence of
simplicial complexes
$\emptyset = \simplicialcomplex_0 \subseteq \simplicialcomplex_1 \subseteq \dots \subseteq \simplicialcomplex_{m-1} \subseteq \simplicialcomplex_m = \simplicialcomplex$.
This filtration is thought to represent the `growth' of
$\simplicialcomplex$ as the scale is being changed.
During this growth process, topological features can be \emph{created}~(new
vertices may be added, for example, which creates a new connected component)
or \emph{destroyed}~(two connected components may merge into one).
Persistent homology tracks these changes and represents the creation and
destruction of a feature as a point $(a_i, a_j) \in \real^2$
for indices $i \leq j$ with respect to the filtration.
The collection of all points corresponding to $d$-dimensional
topological features is called the $d$th persistence
diagram~$\diagram_d$. It can be seen as a collection of Betti numbers
at multiple scales.
Given a point $(x,y) \in \diagram_d$, the quantity $\persistence(x,y) :=
|y-x|$ is referred to as its \emph{persistence}.
Typically, high persistence is considered to correspond to
features, while low persistence is considered to indicate
noise~\citep{Edelsbrunner02}.
  
\section{A novel measure for neural network complexity}

\begin{figure}[tbp]
  \iffinal
    \includegraphics{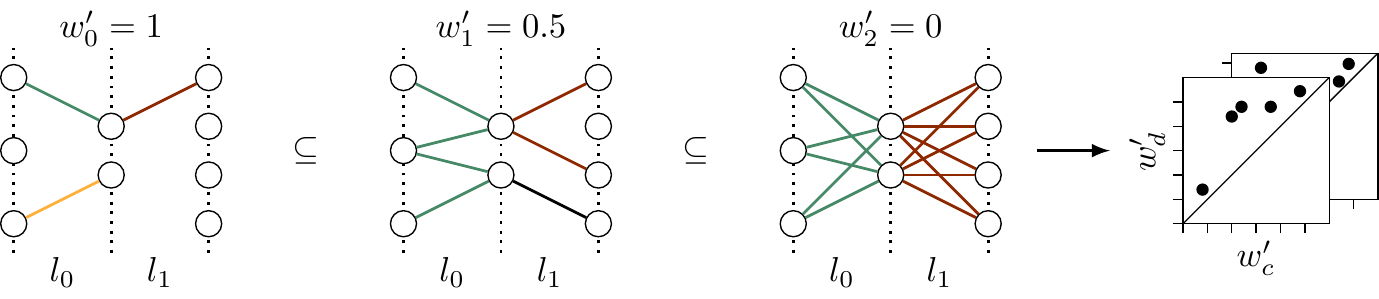}
  \else
    \input{figures/tikz/figure0.tex}
  \fi
	\caption{%
    Illustrating the neural persistence calculation of a network with
    two layers ($l_0$ and $l_1$).
    Colours indicate connected components per layer.
    The filtration process is depicted by colouring connected
    components that are created or merged when the respective
    weights are greater than or equal to the threshold $w'_i$.
    As $w'_i$ decreases, network connectivity increases.
    Creation and destruction thresholds are collected in one persistence
    diagram per layer~(right), and summarized according to
    Equation~\ref{eq:p-norm} for calculating neural persistence.
  }
	\label{fig:filtration}
\end{figure}

This section details \emph{neural persistence}, our novel measure for
assessing the structural complexity of neural networks. By exploiting
both network structure and weight information through persistent
homology, our measure captures network expressiveness and goes beyond
mere connectivity properties.
Subsequently, we describe its calculation, provide theorems for
theoretical and empirical bounds, and show the existence of neural
networks complexity regimes.
To summarize this section, Figure~\ref{fig:filtration} illustrates
how our method treats a neural network.

\subsection{Neural persistence}\label{sec:Neural persistence}

Given a feedforward neural network with an arrangement of neurons and
their connections~$E$, let $\weights$ refer to the set of weights.
Since $\weights$ is typically changing during training, we require a
function $\weight\colon{}E\to\weights$ that maps a specific edge
to a weight.
Fixing an activation function, the connections form a \emph{stratified
graph}.
\begin{definition}[Stratified graph and layers]\label{def:Stratified}
  A \emph{stratified graph} is a multipartite graph $G = (V,E)$ satisfying \ $V = V_0
  \sqcup V_1 \sqcup \dots$, such that if $u \in V_i$, $v \in V_j$, and
  $(u,v) \in E$, we have $j = i+1$. Hence, edges are only permitted between
  adjacent vertex sets. Given $k \in \mathds{N}$, the $k$th \emph{layer}
  of a stratified graph is the unique subgraph
  $G_k := (V_k \sqcup V_{k+1}, E_k := E \cap \{ V_k \times V_{k+1} \})$.
  \label{def:Layer}
\end{definition}
This enables calculating the persistent homology of $G$ and each $G_k$,
using the filtration induced by sorting all weights,
which is common practice in topology-based network
analysis~\citep{Carstens13, Horak09} where weights often represent
closeness or node similarity.
However, our context requires a novel filtration because the weights
arise from an incremental fitting procedure, namely the training, which
could theoretically lead to unbounded values.
When analysing geometrical data with persistent homology, one typically
selects a filtration based on the~(Euclidean) distance between data
points~\citep{Bubenik15}. The filtration then connects points that are
increasingly distant from each other, starting from points that are
direct neighbours. Our network filtration aims to mimic this behaviour
in the context of fully-connected neural networks.
Our framework does not \emph{explicitly} take activation functions into
account; however, activation functions influence the evolution
of weights during training.

\paragraph{Filtration}
%
Given the set of weights~$\weights$ for one training step, let $w_{\max}
:= \max_{w \in \weights} |w|$.
Furthermore, let $\weights' := \{ |w| / w_{\max} \mid w \in \weights \}$
be the set of transformed weights, indexed in non-ascending order, such
that $1 = w'_0 \geq w'_1 \geq \dots \geq 0$.
This permits us to define a filtration for the $k$th layer $G_k$ as $G_k^{(0)}
\subseteq G_k^{(1)} \subseteq \dots$, where
$G_k^{(i)} := \left(V_k \sqcup V_{k+1}, \{ (u,v) \mid (u,v) \in E_k \wedge \weight'(u,v) \geq w'_i \} \right)$
and $\weight'(u,v) \in \weights'$ denotes the transformed weight of an edge.
We tailored this filtration towards the analysis of neural networks, for
which large~(absolute) weights indicate that certain neurons exert
a larger influence over the final activation of a layer.
The strength of a connection is thus preserved by the filtration, and
weaker weights with $|w| \approx 0$ remain close to $0$. Moreover, since
$w' \in [0,1]$ holds for the transformed weights, this filtration makes the
network invariant to scaling, which simplifies the comparison of different
networks.

\paragraph{Persistence diagrams}
%
Having set up the filtration, we can calculate persistent homology for
every layer $G_k$.
As the filtration contains at most $1$-simplices~(edges), we capture
zero-dimensional topological information, i.e.\ how connected
components are created and merged during the filtration.
These information are structurally equivalent to calculating a maximum
spanning tree using the weights, or performing hierarchical clustering
with a specific setup~\citep{Carlsson10}.
While it would theoretically be possible to include higher-dimensional
information about each layer $G_k$, for example in the form of
cliques~\citep{Rieck18a}, we focus on zero-dimensional information in
this paper, because of the following advantages:
\begin{inparaenum}[i)]
  \item the resulting values are easily interpretable as they
  essentially describe the clustering of the network at multiple
  weight thresholds,
  \item previous research~\citep{Rieck16a, hofer2017deep} indicates that
  zero-dimensional topological information is already capturing a large
  amount of information, and
  \item persistent homology calculations are highly efficient in this regime~(see below).
\end{inparaenum}
We thus calculate zero-dimensional persistent homology with this
filtration. The resulting persistence diagrams have a special structure:
since our filtration solely sorts \emph{edges}, all vertices are present
at the beginning of the filtration, i.e.\ they are already part of
$G_k^{(0)}$ for each $k$. As a consequence, they are assigned a weight
of $1$, resulting in $|V_k \times V_{k+1}|$ connected components.
Hence, entries in the corresponding persistence diagram $\diagram_k$ are
of the form $(1,x)$, with $x \in \weights'$, and will be situated
\emph{below} the diagonal, similar to superlevel set
filtrations~\citep{Bubenik15,Cohen-Steiner09}.
Using the $p$-norm of a persistence diagram, as introduced by
\citet{Cohen-Steiner10}, we obtain the following definition for neural
persistence.
%
\begin{definition}[Neural persistence]
The neural persistence of the $k$th layer $G_k$, denoted by
$\neuralpersistence(G_k)$, is the $p$-norm of the persistence diagram
$\diagram_k$ resulting from our previously-introduced filtration, i.e.\
\begin{equation}
  \neuralpersistence(G_k) := \norm{\diagram_k}_p := \Big(\sum_{(c,d) \in \diagram_k} \persistence(c,d)^p\Big)^\frac{1}{p},
  \label{eq:p-norm}
\end{equation}
which~(for $p=2$) captures the Euclidean distance of points in
$\diagram_k$ to the diagonal.
\end{definition}

The $p$-norm is known to be a stable summary~\citep{Cohen-Steiner10} of
topological features in a persistence diagram.
For neural persistence to be a meaningful measure of structural
complexity, it should increase as a neural network is learning.
We evaluate this and other properties in Section~\ref{sec:Experiments}.

Algorithm~\ref{alg:Neural persistence} provides pseudocode for the
calculation process. It is highly efficient: the filtration~(line~\ref{pcline:filtration})
amounts to sorting all $n$ weights of a network, which has
a computational complexity of $\mathcal{O}(n \log{} n)$.
Calculating persistent homology of this filtration~(line~\ref{pcline:pers}) can
be realized using an algorithm based on union--find data structures~\cite{Edelsbrunner02}.
This has a computational complexity of $\mathcal{O}\left(n \cdot \alpha\left(n\right)\right)$,
where $\alpha(\cdot)$ refers to the extremely slow-growing inverse of
the Ackermann function~\citep[Chapter~22]{Cormen09}.
We make our implementation and experiments available under
\url{https://github.com/BorgwardtLab/Neural-Persistence}.

\begin{algorithm}[bt]
  \caption{Neural persistence calculation}
  \label{alg:Neural persistence}
  \algorithmicrequire{} Neural network with $l$ layers and weights~$\weights$
  \begin{algorithmic}[1]
    \State $w_{\max} \gets \max_{w \in \weights} |w|$                     \Comment{Determine largest absolute weight}

    \State $\weights' \gets \{ |w| / w_{\max} \mid w \in \weights \}$           \Comment{Transform weights for filtration} \label{pcline:init}
    \For{$k \in \{0,\dots,l-1\}$} \label{pcline:forloop}
      \State $F_k \gets G_k^{(0)} \subseteq G_k^{(1)} \subseteq \dots$   \Comment{Establish filtration of $k$th layer} \label{pcline:filtration}
      \State $\diagram_k \gets \Call{PersistentHomology}{F_k}$  \Comment{Calculate persistence diagram} \label{pcline:pers}
    \EndFor
    \State \textbf{return} $\{\norm{\diagram_0}_p, \dots, \norm{\diagram_{l-1}}_p \}$ \Comment{Calculate neural persistence for each layer}
  \end{algorithmic}
\end{algorithm}

\subsection{Properties of neural persistence}\label{sec:Theory}

We elucidate properties about neural persistence to permit the
comparison of networks with different architectures.  As a first step,
we derive \emph{bounds} for the neural persistence of a single layer
$G_k$.

\begin{theorem}
  \label{thm:Neural persistence bounds}
  Let $G_k$ be a layer of a neural network according to
  Definition~\ref{def:Layer}.
  Furthermore, let $\weight_k\colon{}E_k\to\weights'$ denote the
  function that assigns each edge of $G_k$ a transformed weight.
  Using the filtration from Section~\ref{sec:Neural persistence} to
  calculate persistent homology, the neural persistence $\neuralpersistence(G_k)$
  of the $k$th layer satisfies
  \begin{equation}
    0 \leq \neuralpersistence(G_k) \leq \left( \max_{e\in E_k} \weight_k(e) - \min_{e\in E_k} \weight_k(e) \right) ( |V_k \times V_{k+1}|  - 1 )^\frac{1}{p},
    \label{eq:Total persistence bounds}
  \end{equation}
  where $|V_k \times V_{k+1}|$ denotes the cardinality of the vertex set, i.e.\ the
  number of neurons in the layer.
\end{theorem}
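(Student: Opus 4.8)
The plan is to treat the two inequalities separately; the lower bound is immediate, and the upper bound rests on two structural facts about the zero-dimensional persistence diagram $\diagram_k$ that our filtration produces.

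For the lower bound I would simply note that $\neuralpersistence(G_k) = \norm{\diagram_k}_p$ is the $p$-norm of a multiset of persistence values $\persistence(c,d) = |d - c| \geq 0$; a sum of non-negative $p$-th powers is non-negative, as is its $p$-th root, so $\neuralpersistence(G_k) \geq 0$ (with equality precisely when every point lies on the diagonal).

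For the upper bound, the first step is to count the points of $\diagram_k$. Because we compute zero-dimensional persistent homology on a filtration that only ever inserts edges, every point records the death of a connected component, i.e.\ a merge event. The filtration starts with all $|V_k \times V_{k+1}|$ neurons of $G_k$ present as isolated components; each merge decreases the number of components by exactly one, and at least one component must survive until the end. Hence the number of merges---and thus the number of points in $\diagram_k$---is at most $|V_k \times V_{k+1}| - 1$. The second step is to bound the persistence of a single point: every birth and death coordinate is a transformed weight encountered during the filtration of $G_k$, so both coordinates lie in the interval $\left[ \min_{e \in E_k} \weight_k(e), \max_{e \in E_k} \weight_k(e) \right]$, and therefore $\persistence(c,d) = |d-c|$ cannot exceed the width $\max_{e} \weight_k(e) - \min_{e} \weight_k(e)$ of that interval.

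Combining these gives the result in one line. Writing $N \leq |V_k \times V_{k+1}| - 1$ for the number of points and $M := \max_{e} \weight_k(e) - \min_{e} \weight_k(e)$ for the maximal per-point persistence,
\begin{equation}
  \neuralpersistence(G_k) = \Big( \sum_{(c,d) \in \diagram_k} \persistence(c,d)^p \Big)^{\frac{1}{p}} \leq \big( N \cdot M^p \big)^{\frac{1}{p}} = M\, N^{\frac{1}{p}} \leq M \, \big( |V_k \times V_{k+1}| - 1 \big)^{\frac{1}{p}},
\end{equation}
which is exactly Equation~\ref{eq:Total persistence bounds}. I expect the genuine work to lie in the two structural facts rather than in this final estimate: the point count follows from the standard correspondence between zero-dimensional persistence and the union--find / merge-tree process governed by the elder rule, while the per-point bound hinges on verifying that every diagram coordinate indeed falls within the range of transformed weights of the layer. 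Once both are in place, the $p$-norm estimate above is immediate.
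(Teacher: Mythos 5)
Your proof is correct in substance, but it takes a genuinely different route from the paper's. The paper argues \emph{constructively}: it exhibits a configuration (all weights equal) realizing the lower bound, a configuration (one edge of weight $b$, all remaining edges of weight $a<b$) realizing the upper bound exactly, and then claims that perturbing the weight function cannot push $\neuralpersistence$ any higher. Your argument instead bounds $\neuralpersistence(G_k)$ for an \emph{arbitrary} weight function via the two structural facts (at most $n-1$ points, where $n = |V_k \times V_{k+1}|$, and per-point persistence at most $\max_{e}\weight_k(e)-\min_{e}\weight_k(e)$), followed by a one-line $p$-norm estimate. Each approach buys something the other lacks: the paper's perturbation step only examines deviations from its extremal configuration, so as a proof that \emph{no} weight assignment exceeds the bound it is incomplete, and your direct estimate genuinely fills that hole; conversely, the paper's explicit constructions show the bounds are \emph{attained}, which is what legitimizes dividing by the upper bound in the definition of normalized neural persistence---your proof establishes the inequality but says nothing about tightness.

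One caveat deserves explicit attention, because it sits exactly where the paper is ambiguous, and you flagged it yourself as the step ``hinging on verification.'' Your per-point bound needs every \emph{birth} coordinate to lie in $\left[\min_{e}\weight_k(e), \max_{e}\weight_k(e)\right]$. The main text, however, states that all vertices are present in $G_k^{(0)}$ and are assigned weight $1$, so that diagram points have the form $(1,x)$; under that reading your claim fails, and so does the theorem itself (take all weights equal to $\theta<1$: one would get points of persistence $1-\theta>0$ while the claimed upper bound is $0$). The reading under which both the theorem and your fact hold is the one tacitly adopted in the paper's own proof: the vertices of a layer enter the filtration at that layer's maximal weight, so births and deaths are confined to the layer's weight range. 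Relatedly, the paper's bookkeeping assigns a point to the surviving essential component as well (its extremal example records $n$ points: one $(b,b)$ plus $n-1$ of the form $(b,a)$), under which your count of at most $n-1$ points is off by one; the bound still survives because the first edge of the filtration necessarily merges two components born at the maximal weight, contributing a pair of persistence zero, so at most $n-1$ points carry nonzero persistence. If you instead discard essential classes (reduced persistence), your count is exact. Your final estimate stands either way, but these two conventions---where components are born, and whether essential classes are recorded---should be stated explicitly, since the inequality is sensitive to both.
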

\begin{proof}
  We prove this constructively and show that the bounds can be
  realized. For the lower bound, let $G_k^-$ be a fully-connected
  layer with $|V_k|$ vertices and, given $\theta \in [0,1]$, let
  $\weight_k(e) := \theta$ for every edge $e$.
  Since a vertex $v$ is created before its incident edges, the
  filtration degenerates to a lexicographical ordering of vertices and
  edges, and all points in $\diagram_k$ will be of the form
  $(\theta,\theta)$. Thus, $\neuralpersistence(G_k^-) = 0$.
  For the upper bound, let $G_k^+$ again be
  a fully-connected layer with $|V_k| \geq 3$ vertices and let $a, b \in
  [0,1]$ with $a < b$.
  Select one edge $e'$ at random and define a weight function as
  $\weight(e') := b$ and $\weight(e) := a$ otherwise.
  In the filtration, the addition of the first edge will create a pair
  of the form $(b,b)$, while all other pairs will be of the form
  $(b,a)$. Consequently, we have
  \begin{align}
    \neuralpersistence(G_k^+) &= \Big(\persistence(b,b)^p + (n-1)\cdot\persistence(b,a)^p\Big)^\frac{1}{p} = (b-a)\cdot(n-1)^\frac{1}{p}\label{eq:Upper bound difference}\\
                            &= \left( \max_{e\in E_k} \weight(e) - \min_{e\in E_k} \weight(e) \right) ( |V_k| - 1 )^\frac{1}{p},
  \end{align}
  so our upper bound can be realized.
  To show that this term cannot be exceeded by $\neuralpersistence(G)$
  for any $G$, suppose we perturb the weight function
  $\widetilde{\weight}(e) := \weight(e) + \epsilon\in [0,1]$.
  This cannot increase $\neuralpersistence$, however, because each
  difference $b-a$ in Equation~\ref{eq:Upper bound
  difference} is maximized by $\max\weight(e) - \min\weight(e)$.
\end{proof}

We can use the upper bound of Theorem~\ref{thm:Neural persistence
bounds} to normalize the neural persistence of a layer, making it
possible to compare layers~(and neural networks) that feature different
architectures, i.e.\ a different number of neurons.
\begin{definition}[Normalized neural persistence]
  For a layer $G_k$ following Definition~\ref{def:Layer}, using the
  upper bound of Theorem~\ref{thm:Neural persistence bounds}, the
  \emph{normalized neural persistence} $\normneuralpersistence(G_k)$ is
  defined as the neural persistence of $G_k$ divided by its upper bound,
  i.e.\
  $\normneuralpersistence(G_k) := \neuralpersistence(G_k) \cdot \neuralpersistence(G_k^+)^{-1}$.
\end{definition}
The normalized neural persistence of a layer permits us to extend the
definition to an entire network. While this is more complex than using
a single filtration for a neural network, this permits us to side-step
the problem of different layers having different scales.
\begin{definition}[Mean normalized neural persistence]
  Considering a network as a stratified graph $G$ according to
  Definition~\ref{def:Stratified}, we sum the neural persistence values
  per layer to obtain the \emph{mean normalized neural persistence},
  i.e.\
	$\meanneuralpersistence(G) := 1/l \cdot \sum_{k=0}^{l-1} \normneuralpersistence(G_k)$.
  \label{def:mean-normalized-np}
\end{definition}
While Theorem~\ref{thm:Neural persistence bounds} gives a lower and
upper bound in a general setting, it is possible to obtain
empirical bounds when we consider the tuples that result from the
computation of a persistence diagram.
Recall that our filtration ensures that the persistence diagram of a
layer contains tuples of the form $(1,w_i)$, with $w_i \in [0,1]$ being a
transformed weight.
Exploiting this structure permits us to obtain bounds that could be used
prior to calculating the actual neural persistence value in order to
make the implementation more efficient.
%
\begin{theorem}
	\label{thm:Tighter bounds}
  Let $G_k$ be a layer of a neural network as in Theorem~\ref{thm:Neural
  persistence bounds} with $n$ vertices and $m$ edges whose edge weights
  are sorted in non-descending order, i.e.\ $w_0 \leq w_2 \leq \dots
  \leq w_{m-1}$.
  Then $\neuralpersistence(G_k)$ can be empirically bounded by
	\begin{equation}
    \norm{\mathds{1} - \mathbf{w}_{\max}}_p \leq \neuralpersistence(G_k) \leq \norm{\mathds{1} - \mathbf{w}_{\min}}_p,
	  \label{eq:Tighter bounds}
	\end{equation}
	where
  $\mathbf{w}_{\max} = \left( w_{m-1}, w_{m-2}, \dots, w_{m-n}\right)^T$
  and
  $\mathbf{w}_{\min} = \left( w_0, w_2, \dots, w_{n-1}\right)^T$
  are the vectors containing the $n$ largest and $n$ smallest weights,
  respectively.
\end{theorem}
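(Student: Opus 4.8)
The plan is to reduce the claim to an entrywise comparison of $\ell_p$ norms by first pinning down the exact shape of the persistence diagram and then invoking monotonicity. Since the filtration adds edges in non-ascending order of transformed weight while all $n$ vertices are already present at the start, the zero-dimensional persistent homology is exactly the merge history of a union--find computation (equivalently, a maximum spanning tree): an edge contributes a point to $\diagram_k$ precisely when it joins two hitherto-separate connected components, and the second coordinate of that point is the weight at which the merge occurs. As recalled in Section~\ref{sec:Neural persistence}, every such point has the form $(1,d)$ with $d \in [0,1]$ a transformed edge weight, so its persistence is $\persistence(1,d) = 1-d$. Writing $\mathbf{d}$ for the vector of these death weights, we then have $\neuralpersistence(G_k) = \norm{\mathds{1} - \mathbf{d}}_p$, and the whole theorem becomes a statement about how the entries of $\mathbf{d}$ compare with the extremal edge weights.

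Next I would establish the two order-statistic inequalities that drive the bounds. The death weights form a sub-multiset of the $m$ edge weights, so sorting them in non-ascending order, the $j$th largest death weight is at most the $j$th largest weight overall, i.e.\ at most the $j$th entry of $\mathbf{w}_{\max}$; symmetrically, sorting in non-descending order, the $j$th smallest death weight is at least the $j$th entry of $\mathbf{w}_{\min}$. Applying the decreasing map $t \mapsto 1-t$ flips these inequalities, yielding, entrywise and for nonnegative vectors, $\mathds{1} - \mathbf{w}_{\max} \leq \mathds{1} - \mathbf{d} \leq \mathds{1} - \mathbf{w}_{\min}$. Because the $\ell_p$ norm is monotone under entrywise comparison of nonnegative vectors ($0 \leq a_j \leq b_j$ for all $j$ implies $\norm{a}_p \leq \norm{b}_p$), taking norms gives $\norm{\mathds{1} - \mathbf{w}_{\max}}_p \leq \neuralpersistence(G_k) \leq \norm{\mathds{1} - \mathbf{w}_{\min}}_p$, which is exactly Equation~\ref{eq:Tighter bounds}.

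The main obstacle is the bookkeeping of the number of points in $\diagram_k$ and its alignment with the $n$-dimensional vectors $\mathbf{w}_{\max}$ and $\mathbf{w}_{\min}$. A connected layer on $n$ vertices produces only $n-1$ merge events, together with a single essential component that never merges, so the finite part of $\mathbf{d}$ has length $n-1$ rather than $n$; the treatment of that essential class (whether it is dropped, or paired with the smallest filtration value) must be fixed before the comparison against length-$n$ vectors is meaningful. I expect the cleanest route is to account for the essential feature explicitly and to check that the order-statistic inequalities survive the extra coordinate---in particular that padding toward the smallest or largest weights keeps the lower bound from being inflated and the upper bound from being deflated, since these are the two directions that an additional term can threaten. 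Once this indexing is nailed down, the remaining monotonicity steps are routine, and it is worth verifying that both bounds are attained by the degenerate weightings used in the proof of Theorem~\ref{thm:Neural persistence bounds}, confirming that the empirical bounds are at least as tight as the general ones.
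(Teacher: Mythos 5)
Your proposal is correct and follows essentially the same route as the paper's own proof: treat the persistence diagram as selecting a sub-multiset of the $m$ edge weights, compare its order statistics entrywise against $\mathbf{w}_{\min}$ and $\mathbf{w}_{\max}$, apply $t \mapsto 1-t$, and invoke monotonicity of the $p$-norm on nonnegative vectors. If anything, you are more careful than the paper, whose proof passes from $\norm{\mathbf{w}_{\min}}_p \leq \norm{\widetilde{\mathbf{w}}}_p \leq \norm{\mathbf{w}_{\max}}_p$ directly to the reversed bounds (a step that is only justified via the entrywise sorted comparison you make explicit), and which silently absorbs the essential-component bookkeeping you rightly flag---$n-1$ merge deaths plus one surviving component versus length-$n$ vectors---into the unexamined claim that exactly $n$ of the $m$ weights are selected.
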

\begin{proof}\renewcommand{\qedsymbol}{}
  See Section~\ref{app:Proofs} in the appendix.
\end{proof}

\begin{figure}[tbp]
  \centering
    \iffinal
      \includegraphics{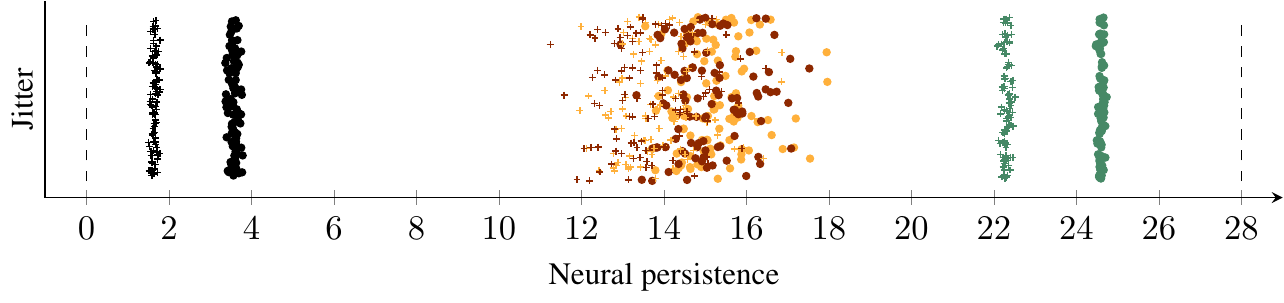}
    \else
      \input{figures/tikz/figure1.tex}
    \fi
  \caption{%
    Neural persistence values of trained perceptrons~(\textcolor{printable_1}{green}),
    diverging ones~(\textcolor{printable_2}{yellow}), random Gaussian
    matrices~(\textcolor{printable_3}{red}), and random uniform
    matrices~(\textcolor{printable_4}{black}).
    We performed 100 runs per category;
    dots indicate neural persistence while crosses indicate the
    predicted lower bound according to Theorem~\ref{thm:Tighter bounds}.
    The bounds according to Theorem~\ref{thm:Neural persistence bounds}
    are shown as dashed lines.
  }
  \label{fig:Neural persistence regimes}
\end{figure}

\paragraph{Complexity regimes in neural persistence}
%
As an application of the two theorems, we briefly take a look at how
neural persistence changes for different classes of simple neural
networks. To this end, we train a perceptron on the `MNIST' data set.
Since our measure uses the weight matrix of a perceptron, we can
compare its neural persistence with the neural persistence of random
weight matrices, drawn from different distributions. Moreover, we can
compare trained networks with respect to their initial parameters.
Figure~\ref{fig:Neural persistence regimes} depicts the neural
persistence values as well as the lower bounds according to
Theorem~\ref{thm:Tighter bounds} for different settings. We can see that
a network in which the optimizer diverges~(due to improperly selected
parameters) is similar to a random Gaussian matrix. Trained networks, on
the other hand, are clearly distinguished from all other networks.
Uniform matrices have a significantly lower neural persistence than
Gaussian ones. This is in line with the intuition that the latter type
of networks induces functional sparsity because few neurons have
large absolute weights.
For clarity, we refrain from showing the empirical upper bounds because
most weight distributions are highly right-tailed; the bound
will not be as tight as the lower bound.
These results are in line with a previous analysis~\citep{Sizemore17} of
small weighted networks, in which persistent homology is seen to
outperform traditional graph-theoretical complexity measures such as
the clustering coefficient~(see also Section~\ref{sec:Comparison graph theory}
in the appendix).
For deeper networks, additional experiments discuss the relation between
validation accuracy and neural persistence~(Section~\ref{sec:val}), the
impact of different data distributions, as well as the variability of
neural persistence for architectures of varying
depth~(Section~\ref{sec:datadistrib}).

\section{Experiments}\label{sec:Experiments}

This section demonstrates the utility and relevance of neural
persistence for fully connected deep neural networks.
We examine how commonly used regularization techniques~(batch
normalization and dropout) affect neural persistence of trained
networks.
Furthermore, we develop an early stopping criterion based on neural
persistence and we compare it to the traditional criterion based on
validation loss.
We used different architectures with \emph{ReLU} activation functions
across experiments. The brackets denote the number of units per
\emph{hidden} layer. In addition, the Adam optimizer with hyperparameters tuned via
cross-validation was used unless noted otherwise.
Please refer to Table~\ref{tab:best} in the appendix for further details
about the experiments.

\begin{figure}[bp]
  \centering
  \iffinal
    \includegraphics{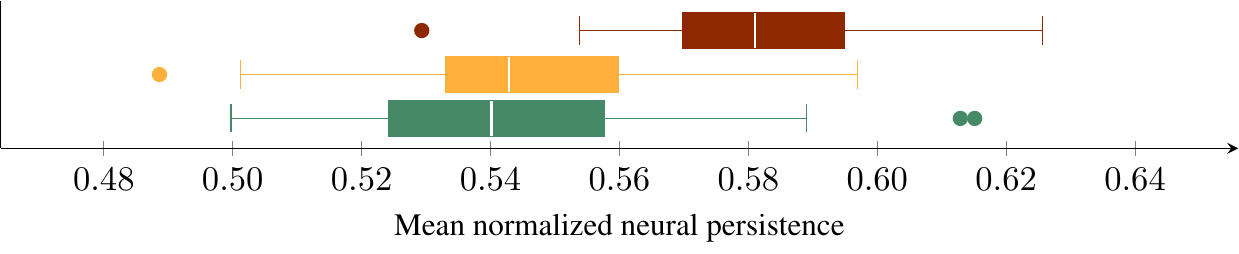}
  \else
    \input{figures/tikz/figure2.tex}
  \fi
  \caption{%
     Comparison of mean normalized neural persistence for trained
     networks without modifications~(\textcolor{printable_1}{green}),
     with batch normalization~(\textcolor{printable_2}{yellow}), and
     with 50\% of the neurons dropped out during
     training~(\textcolor{printable_3}{red}) for the `MNIST' data set~(50
     runs per setting).
  }
  \label{fig:comparison-regularizations}
\end{figure}

\subsection{Deep learning best practices in light of neural persistence}

We compare the mean normalized neural persistence~(see Definition~\ref{def:mean-normalized-np})
of a two-layer~(with an architecture of $[650,650]$) neural network to two models where batch normalization~\citep{ioffe2015batch} or dropout~\citep{srivastava2014dropout} are applied.
Figure~\ref{fig:comparison-regularizations} shows that the networks
designed according to best practices yield higher normalized neural
persistence values on the `MNIST' data set in comparison to an unmodified network.
The effect of dropout on the mean normalized neural persistence is more
pronounced and this trend is directly analogous to the observed accuracy
on the test set.
These results are consistent with expectations if we consider dropout to
be similar to ensemble learning~\citep{hara2016analysis}. As individual
parts of the network are trained independently, a higher degree of
per-layer redundancy is expected, resulting in a different structural
complexity.
Overall, these results indicate that for a fixed architecture approaches targeted at increasing
the neural persistence during the training process may be of particular
interest.

\subsection{Early stopping based on neural persistence}
\label{sec:Early stopping}

Neural persistence can be used as an \emph{early stopping}
criterion that does not require a validation data set to
prevent overfitting: if the mean normalized neural persistence does not
increase by more than $\Delta_{\min}$ during a certain number of epochs
$g$, the training process is stopped. This procedure is called
`patience' and Algorithm~\ref{alg:Early stopping} describes it in
detail.
A similar variant of this algorithm, using validation loss instead of
persistence, is the state-of-the-art for early stopping in
training~\citep{Bengio12, Keras15}.
To evaluate the efficacy of our measure, we compare it against
validation loss in an extensive set of scenarios. More precisely, for
a training process with at most $G$ epochs, we define a $G \times G$
parameter grid consisting of the `patience' parameter~$g$ and
a burn-in rate~$b$~(both measured in epochs).
$b$ defines the number of epochs after which an early stopping criterion
starts monitoring, thereby preventing underfitting.
Subsequently, we set $\Delta_{\min} = 0$ for all measures to remain
comparable and scale-invariant, as non-zero values could implicitly
favour one of them due to scaling.
For each data set, we perform 100 training runs of the same
architecture, monitoring validation loss and mean normalized neural
persistence every quarter epoch.
The early stopping behaviour of both measures is simulated for each
combination of $b$ and $g$ and their performance over all runs is
summarized in terms of median test accuracy and median stopping epoch;
if a criterion is not triggered for one run, we report the test accuracy
at the end of the training and the number of training epochs.
This results in a scatterplot, where each point~(corresponding to
a single parameter combination) shows the difference in epochs and
the absolute difference in test accuracy~(measured in percent).
The quadrants permit an intuitive explanation: $Q_2$, for example,
contains all configurations for which our measure stops \emph{earlier},
while achieving a \emph{higher} accuracy.
Since $b$ and $g$ are typically chosen to be small in an early stopping
scenario, we use grey points to indicate uncommon configurations for
which $b$ or $g$ is larger than half of the total number of epochs.
Furthermore, to summarize the performance of our measure, we calculate
the barycentre of all configurations~(green square).

\begin{algorithm}[bp]
  \caption{Early stopping based on mean normalized neural persistence}
  \label{alg:Early stopping}
  \algorithmicrequire{} Weighted neural network~$\mathcal{N}$, patience $g$, $\Delta_{\min}$
  \begin{algorithmic}[1]
    \State $P \gets 0$, $G \gets 0$               \Comment{Initialize highest observed value and patience counter}
    \Procedure{EarlyStopping}{$\mathcal{N}$, $g$, $\Delta_{\min}$}           \Comment{Callback that monitors training at every epoch}
      \State $P' \gets \meanneuralpersistence(\mathcal{N})$ \label{earlystop:neuralpers}
      \If{$P' > P + \Delta_{\min}$}               \Comment{Update mean normalized neural persistence and reset counter}
        \State $P \gets P'$, $G \gets 0$ 
      \Else                                       \Comment{Update patience counter}
        \State $G \gets G+1$
      \EndIf
      \If{$G \geq g$}                             \Comment{Patience criterion has been triggered}
        \State \textbf{return} $P$                \Comment{Stop training and return highest observed value}
      \EndIf{}
    \EndProcedure
  \end{algorithmic}
\end{algorithm}

\begin{figure}[tbp]
  \centering
  \raisebox{0.80\height}{
    \hspace*{-1.75cm}
    \begin{minipage}{0.45\textwidth}
      \centering
      \subcaptionbox{Fashion-MNIST\label{sfig:FMNIST}}{%
        \hspace*{-0.75cm}
        \iffinal
          \includegraphics{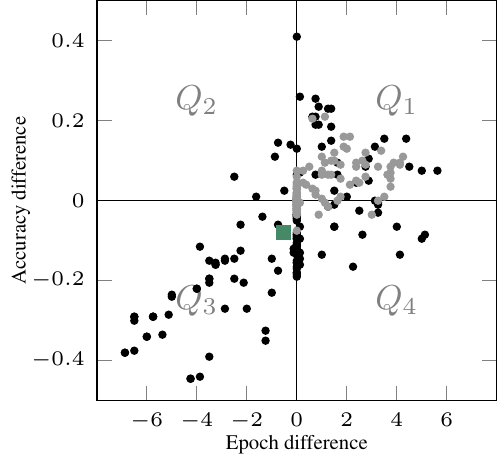}
        \else
          \input{figures/tikz/figure3.tex}
        \fi
      }
      \subcaptionbox{Summary}{%
        \tiny
        \begin{tabular}{lll}
            Data set       & Barycentre      & Final test accuracy\\
            \midrule
            Fashion-MNIST  & $(-0.53,-0.08)$ & $86.72 \pm 0.43$\\ 
            MNIST          & $(+0.17,-0.06)$ & $96.16 \pm 0.24$\\
            CIFAR-10       & $(-1.33,-1.13)$ & $52.19 \pm 3.40$\\
            IMDB           & $(-1.68,+0.07)$ & $87.35 \pm 0.03$\\
        \end{tabular}
      }
  \end{minipage}
  }
  \hspace*{-0.5cm}
  \raisebox{0.80\height}{
    \begin{minipage}{0.45\textwidth}
      \subcaptionbox{Accuracy and epoch differences\label{sfig:FMNIST deltas}}{
        \includegraphics[height=3cm]{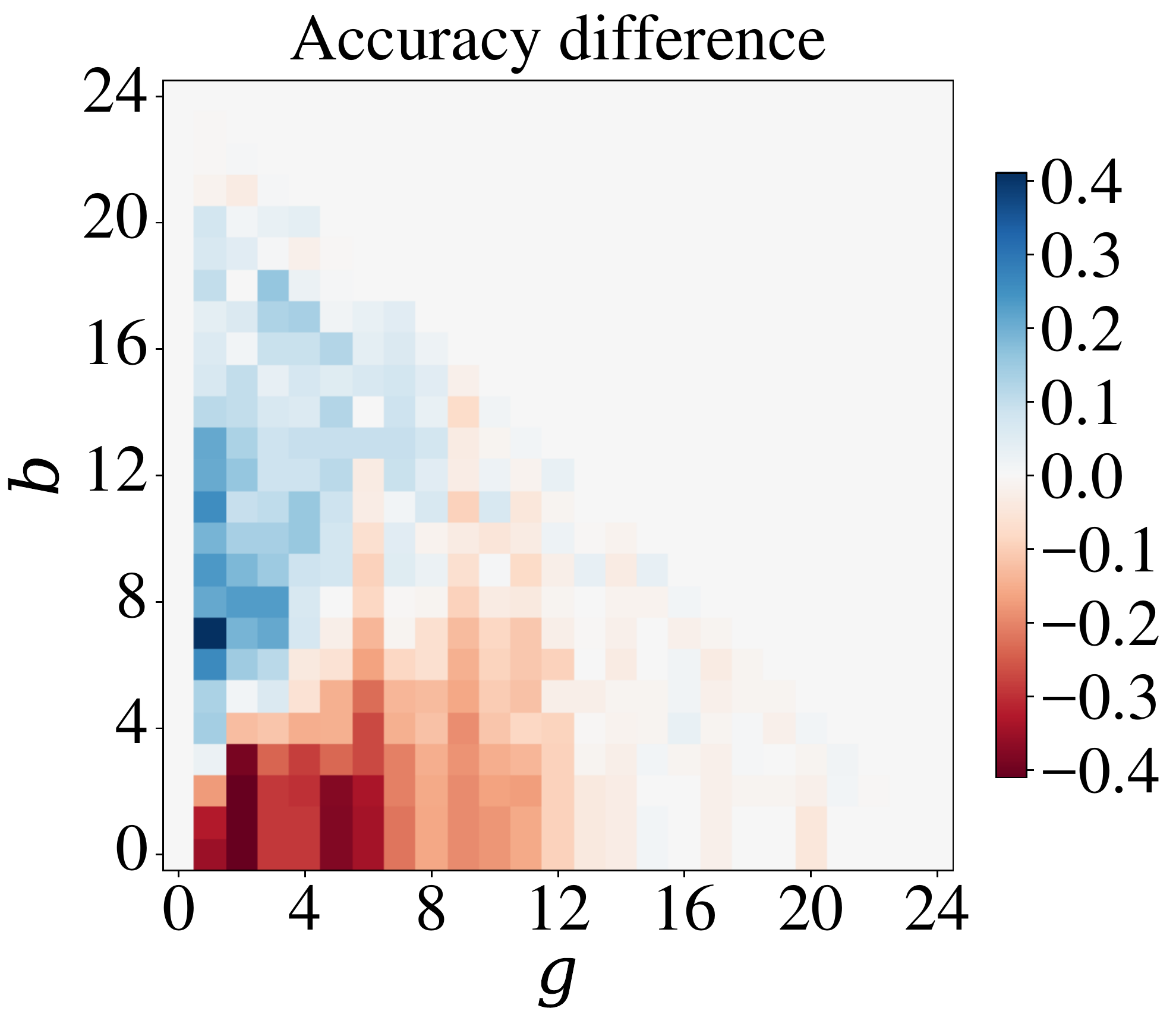}
        \includegraphics[height=3cm]{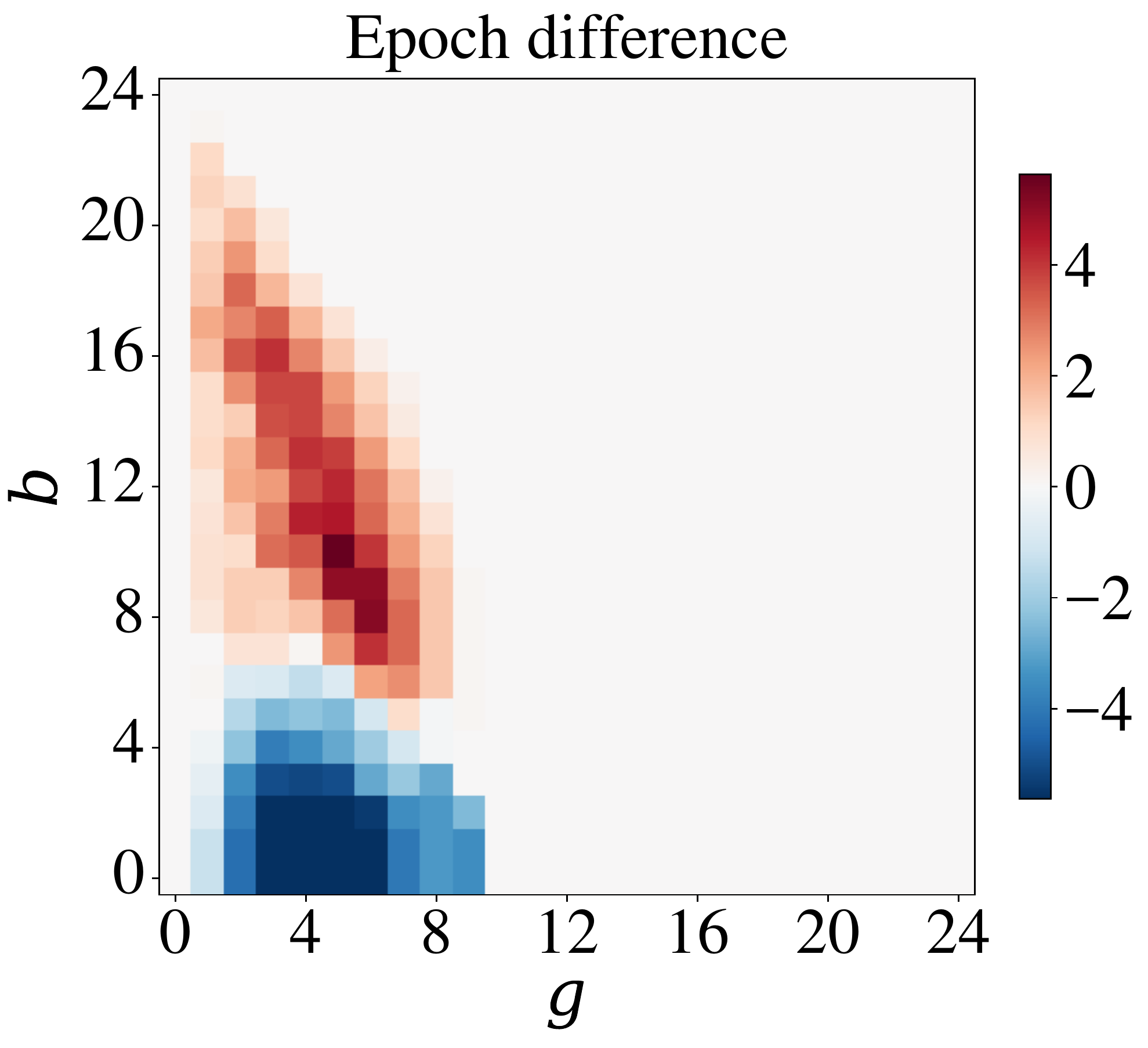}
      }
      \subcaptionbox{Number of triggers\label{sfig:FMNIST triggers}}{
        \includegraphics[height=3cm]{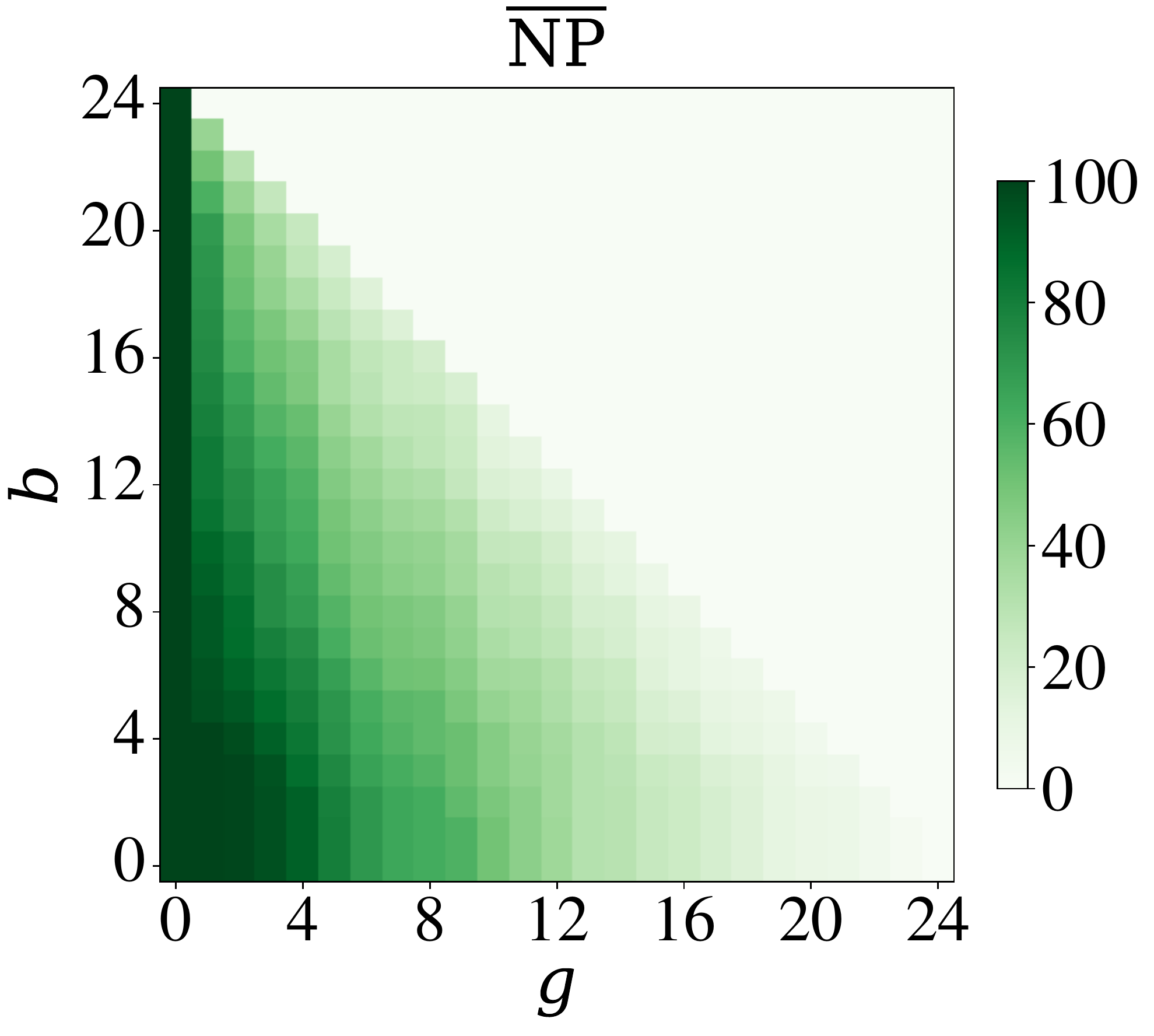}
        \hspace{0.1cm}
        \includegraphics[height=3cm]{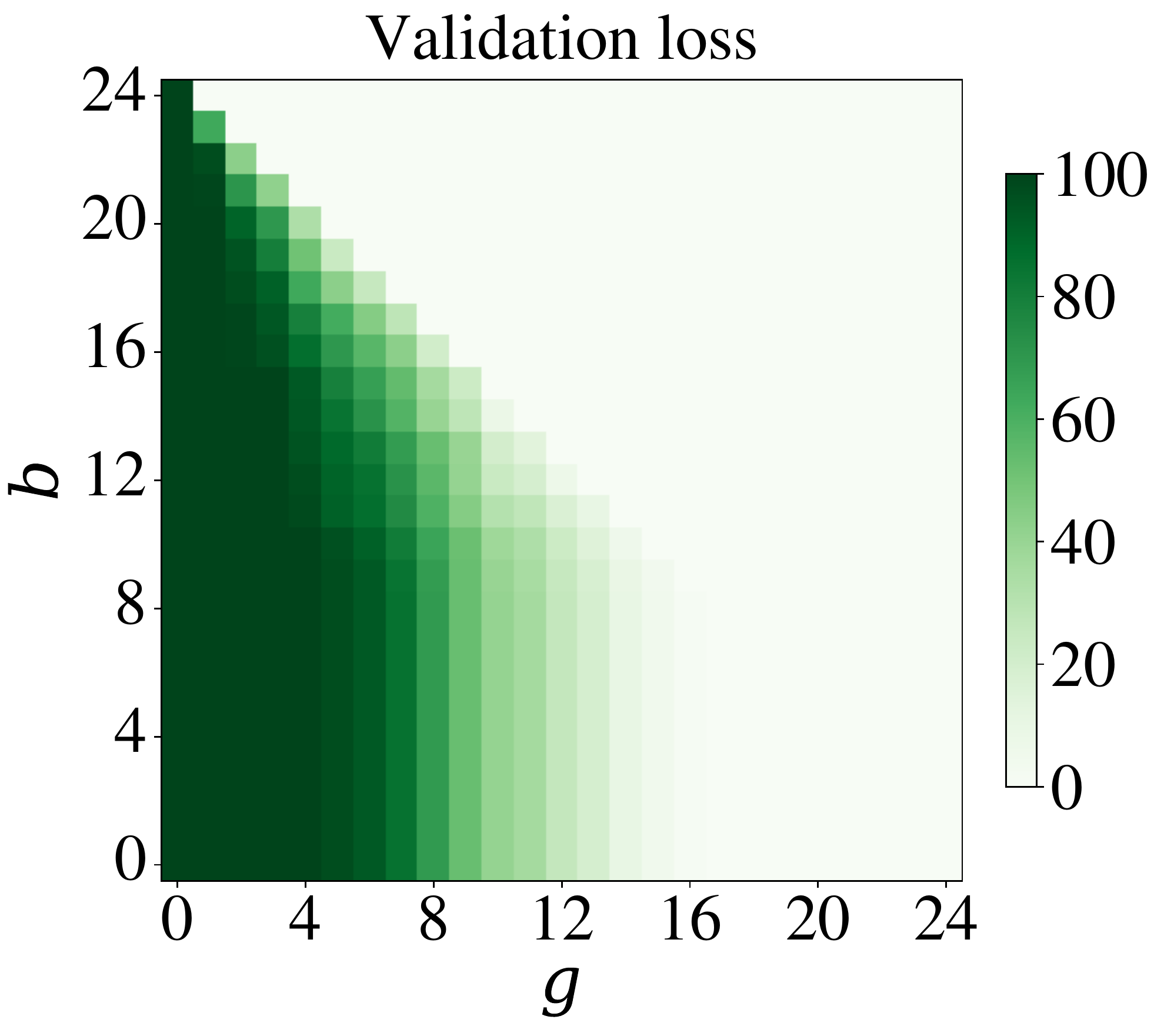}
      }
    \end{minipage}
  }
  \caption{%
    The visualizations depict the differences in accuracy and
    epoch for all comparison scenarios of mean normalized neural
    persistence versus validation loss, while the table
    summarizes the results on other data sets. Final test
    accuracies are shown irrespectively of early stopping to put the
    accuracy differences into context.
  }
  \label{fig:Comparison early stopping}
  \vskip -0.5cm
\end{figure}

Figure~\ref{sfig:FMNIST} depicts the comparison with validation loss for
the `Fashion-MNIST'~\citep{xiao2017fashion} data set; please refer to
Section~\ref{sec:Early stopping plus} in the appendix for more data
sets.
Here, we observe that most common configurations are in $Q_2$ or in
$Q_3$, i.e\ our criterion stops earlier.
The barycentre is at $(-0.53, -0.08)$, showing that out of 625
configurations, on average we stop half an epoch earlier than validation
loss, while losing virtually no accuracy~($0.08\%$).
Figure~\ref{sfig:FMNIST deltas} depicts detailed differences in accuracy
and epoch for our measure when compared to validation loss; each cell in
a heatmap corresponds to a single parameter configuration of $b$ and
$g$.
In the heatmap of accuracy differences, blue, white, and red represent
parameter combinations for which we obtain \emph{higher, equal, or
lower} accuracy, respectively, than with validation loss for the same
parameters.
Similarly, in the heatmap of epoch differences, green represents
parameter combinations for which we stop \emph{earlier} than validation
loss.
For $b \leq 8$, we stop earlier~($0.62$ epochs on average), while losing
only $0.06\%$ accuracy.
Finally, Figure~\ref{sfig:FMNIST triggers} shows how often each measure
is triggered.
Ideally, each measure should consist of a dark green triangle, as this
would indicate that \emph{each} configuration stops all the time. For
this data set, we observe that our method stops for more parameter
combinations than validation loss, but not as frequently for all of
them.
To ensure comparability across scenarios, we did not use the validation
data as additional training data when stopping with neural persistence;
we refer to Section~\ref{sec:extreme} for additional experiments in data
scarcity scenarios. We observe that our method stops earlier when
overfitting can occur, and it stops later when longer training is
beneficial. 

\section{Discussion}\label{sec:Discussion}

In this work, we presented \emph{neural persistence}, a novel topological
measure of the structural complexity of deep neural networks.
We showed that this measure captures topological information that
pertains to deep learning performance.
Being rooted in a rich body of research, our measure is theoretically
well-defined and, in contrast to previous work, generally applicable
as well as computationally efficient.
We showed that our measure correctly identifies networks that employ
best practices such as dropout and batch normalization.
Moreover, we developed an early stopping criterion that exhibits
competitive performance while not relying on a separate validation data
set. Thus, by saving valuable data for training, we managed to boost
accuracy, which can be crucial for enabling deep learning in regimes of
smaller sample sizes.
Following Theorem~\ref{thm:Tighter bounds}, we also experimented with
using the $p$-norm of \emph{all} weights of the neural network as
a proxy for neural persistence. However, this did not yield an early stopping
measure because it was never triggered, thereby suggesting that neural
persistence captures salient information that would otherwise be hidden
among all the weights of a network.
We extended our framework to convolutional neural networks~(see
Section~\ref{sec:cnn}) by deriving a closed-form approximation, and
observed that an early stopping criterion based on neural persistence
for convolutional layers will require additional work. 
Furthermore, we conjecture that assessing dissimilarities of networks by
means of persistence diagrams~(making use of higher-dimensional
topological features), for example, will lead to further insights regarding
their generalization and learning abilities. Another interesting avenue
for future research would concern the analysis of the `function space'
learned by a neural network.
On a more general level, \emph{neural persistence} demonstrates the great
potential of topological data analysis in machine learning.

\bibliography{main}
\bibliographystyle{iclr2019_conference}

\clearpage
\appendix

\counterwithin{figure}{section}
\counterwithin{table} {section}

\section{Appendix}\label{sec:Appendix}

\subsection{Comparison with graph-theoretical measures}
\label{sec:Comparison graph theory}
Traditional complexity/structural measures from graph theory, such as the
clustering coefficient, the average shortest path length, and
global/local efficiency are already known to be insufficiently accurate
to characterize different models of complex random networks~\cite{Sizemore17}.
Our experiments indicate that this holds true for~(deep) neural
networks, too. As a brief example, we trained a perceptron on the MNIST data
set with batch stochastic gradient descent~($\eta = 0.5$), achieving
a test accuracy of $\approx 0.91$. Moreover, we intentionally
`sabotaged' the training by setting $\eta = \num{1e-5}$ such that SGD is
unable to converge properly. This leads to networks with accuracies
ranging from $0.38$--$0.65$.
A complexity measure should be capable of distinguishing both classes of
networks.
However, as Figure~\ref{fig:Motivation}~(top) shows, this is \emph{not}
the case for the clustering coefficient. Neural persistence~(bottom), on
the other hand, results in two regimes that can clearly be distinguished,
with the trained networks having a significantly smaller variance.

\begin{figure}[tbp]
  \centering
  \iffinal
    \includegraphics{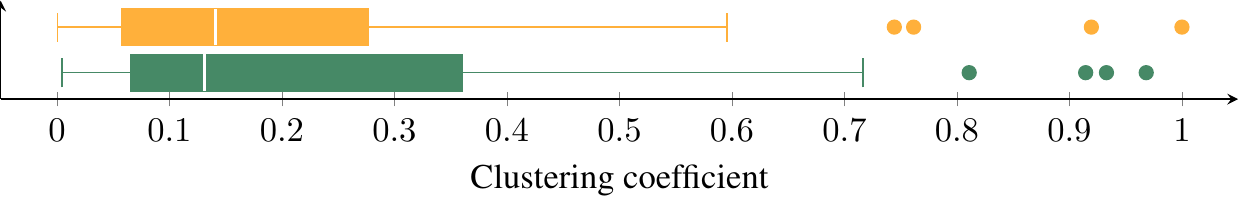}
    \includegraphics{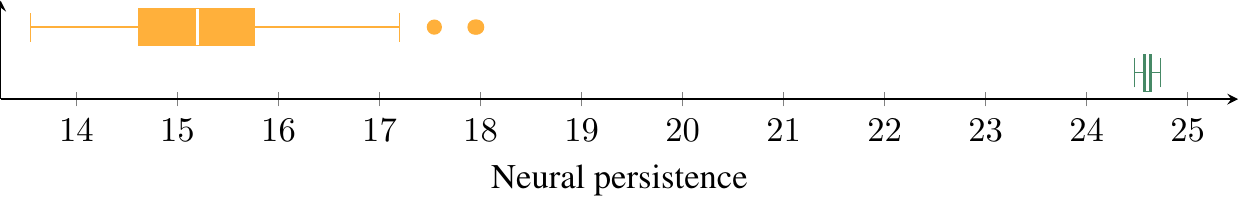}
  \else
    \input{figures/tikz/figure4.tex}
    \input{figures/tikz/figure5.tex}
  \fi
  \caption{%
    Traditional graph measures~(top), such as the clustering
    coefficient, fail to detect differences in the complexity
    of neural networks.
    Our novel \emph{neural persistence} measure~(bottom), by contrast,
    shows that trained networks with $\eta = 0.5$~(\textcolor{printable_1}{green}),
    which have an accuracy of $\approx 0.91$, obey a different
    distribution than networks trained with $\eta
    = \num{1e-0.5}$~(\textcolor{printable_2}{yellow}), which have accuracies ranging
    from $0.38$--$0.65$.
  }
  \label{fig:Motivation}
\end{figure}

\subsection{Proof of Theorem~\ref{thm:Tighter bounds}}\label{app:Proofs}
\begin{proof}
  We may consider the filtration from Section~\ref{sec:Neural persistence}
  to be a subset selection problem with constraints, where we select $n$
  out of $m$ weights.
  The neural persistence $\neuralpersistence(G_k)$ of a layer thus only
  depends on the \emph{selected} weights that appear as tuples of the form
  $(1,w_i)$ in $\diagram_k$.
  Letting $\widetilde{\mathbf{w}}$ denote the vector of selected weights
  arising from the persistence diagram calculation, we can rewrite neural
  persistence as
  $\neuralpersistence(G_k) = \norm{\mathds{1} - \widetilde{\mathbf{w}}}_p$.
  Furthermore, $\widetilde{\mathbf{w}}$ satisfies
  $\norm{\mathbf{w}_{\min}}_p \leq \norm{\widetilde{\mathbf{w}}}_p \leq \norm{\mathbf{w}_{\max}}_p$.
  Since all transformed weights are non-negative in our filtration, it
  follows that~(note the reversal of the two terms)
  \begin{equation}
  \norm{\mathds{1} - \mathbf{w}_{\max}}_p \leq \neuralpersistence(G_k) \leq \norm{\mathds{1} - \mathbf{w}_{\min}}_p,
  \end{equation}
  and the claim follows.
\end{proof}

\subsection{Additional visualizations and analyses for early stopping}
\label{sec:Early stopping plus}

\begin{figure}[p]
  \centering
  \subcaptionbox{Accuracy and epoch differences}{
    \includegraphics[height=4.5cm]{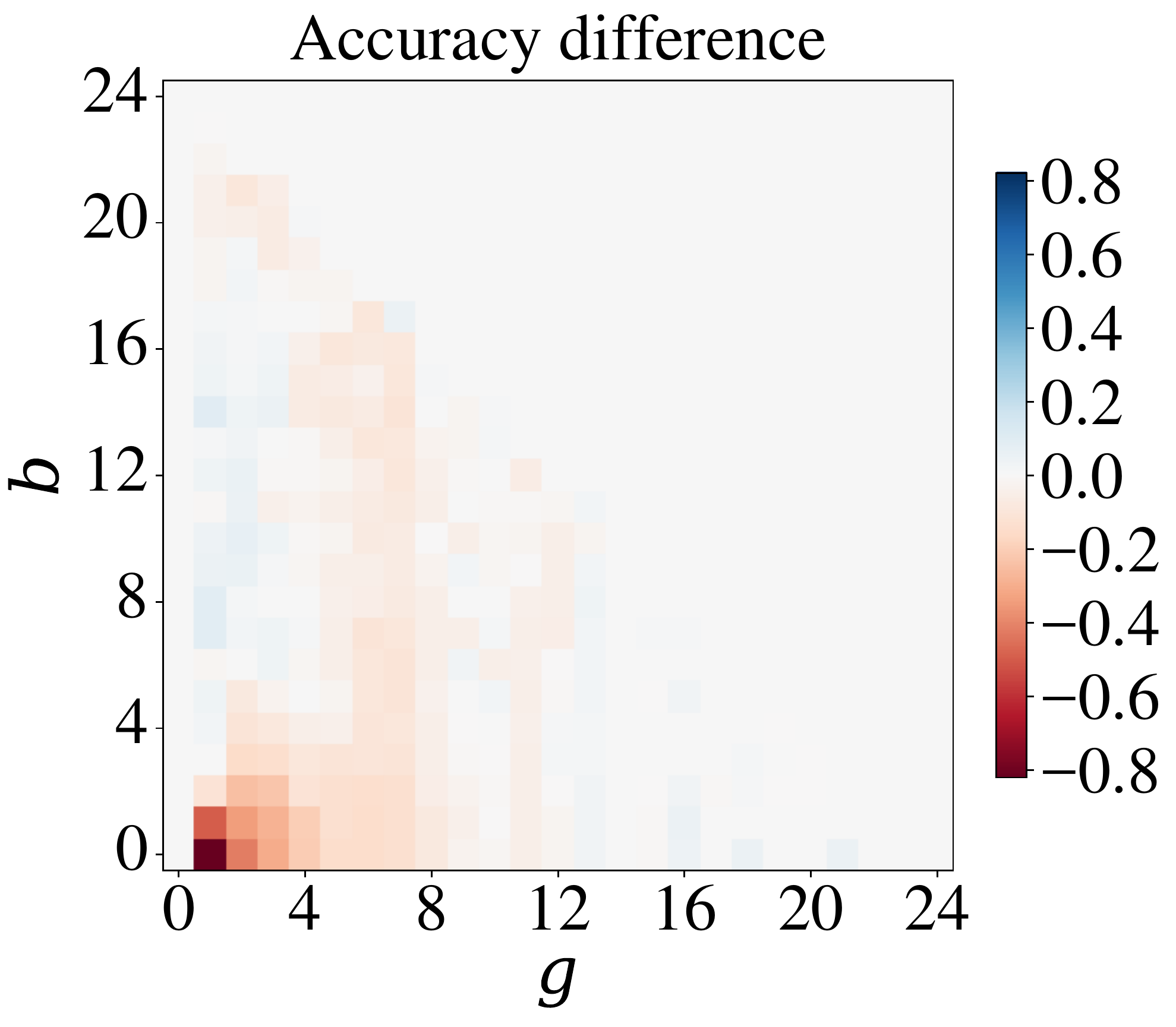}
    \includegraphics[height=4.5cm]{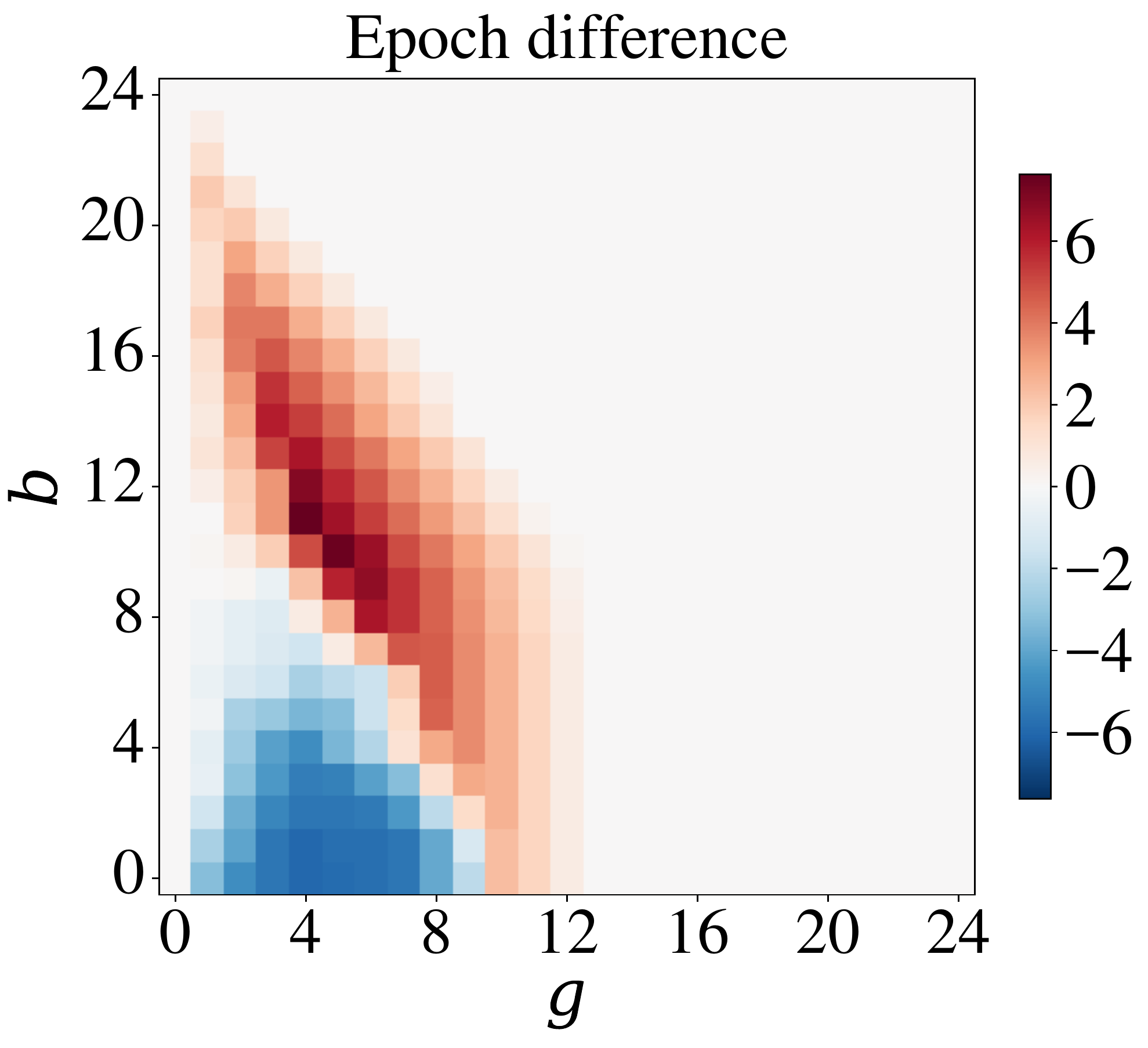}
  }
  \subcaptionbox{Number of triggers}{
    \includegraphics[height=4.5cm]{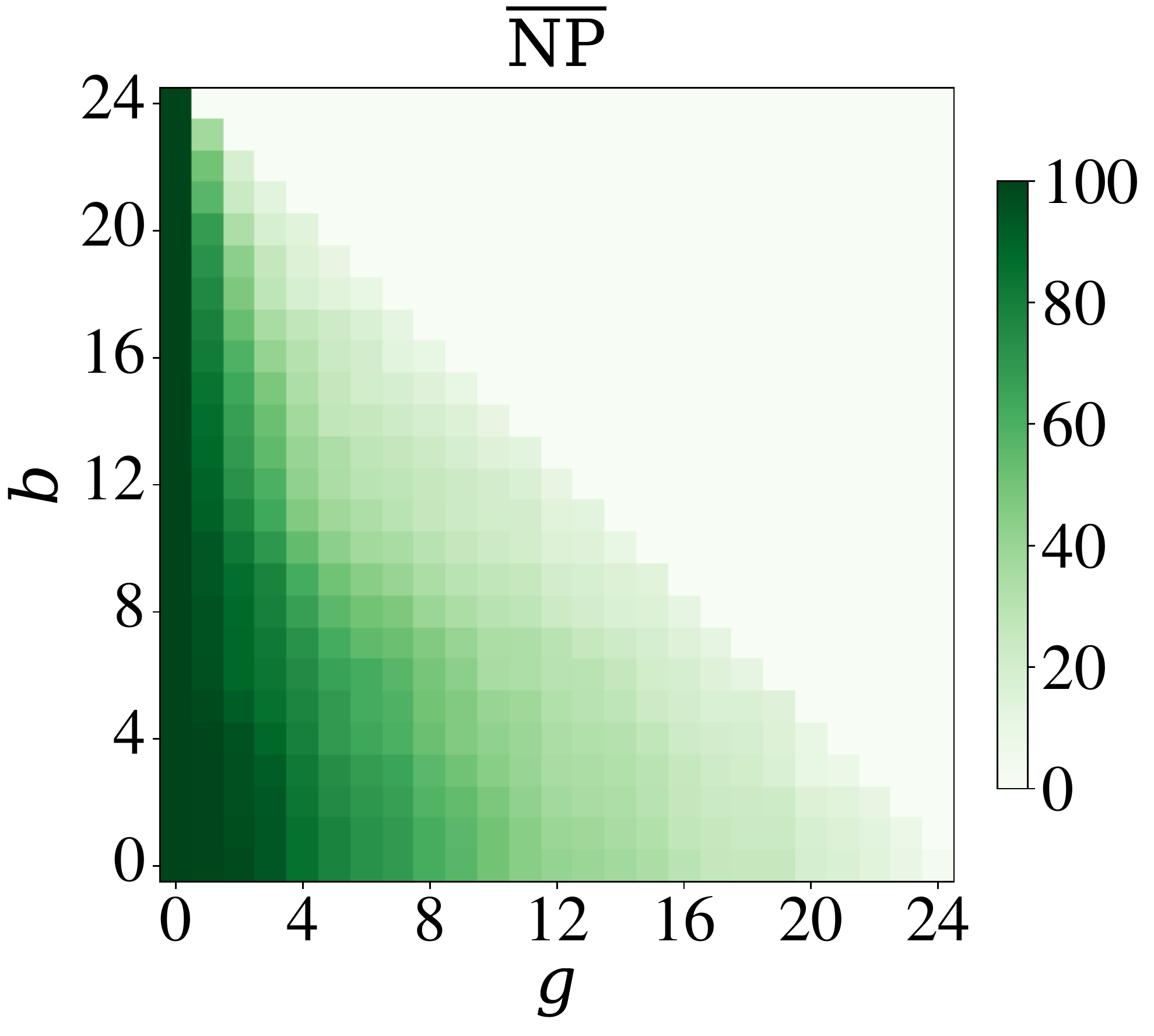}
    \includegraphics[height=4.5cm]{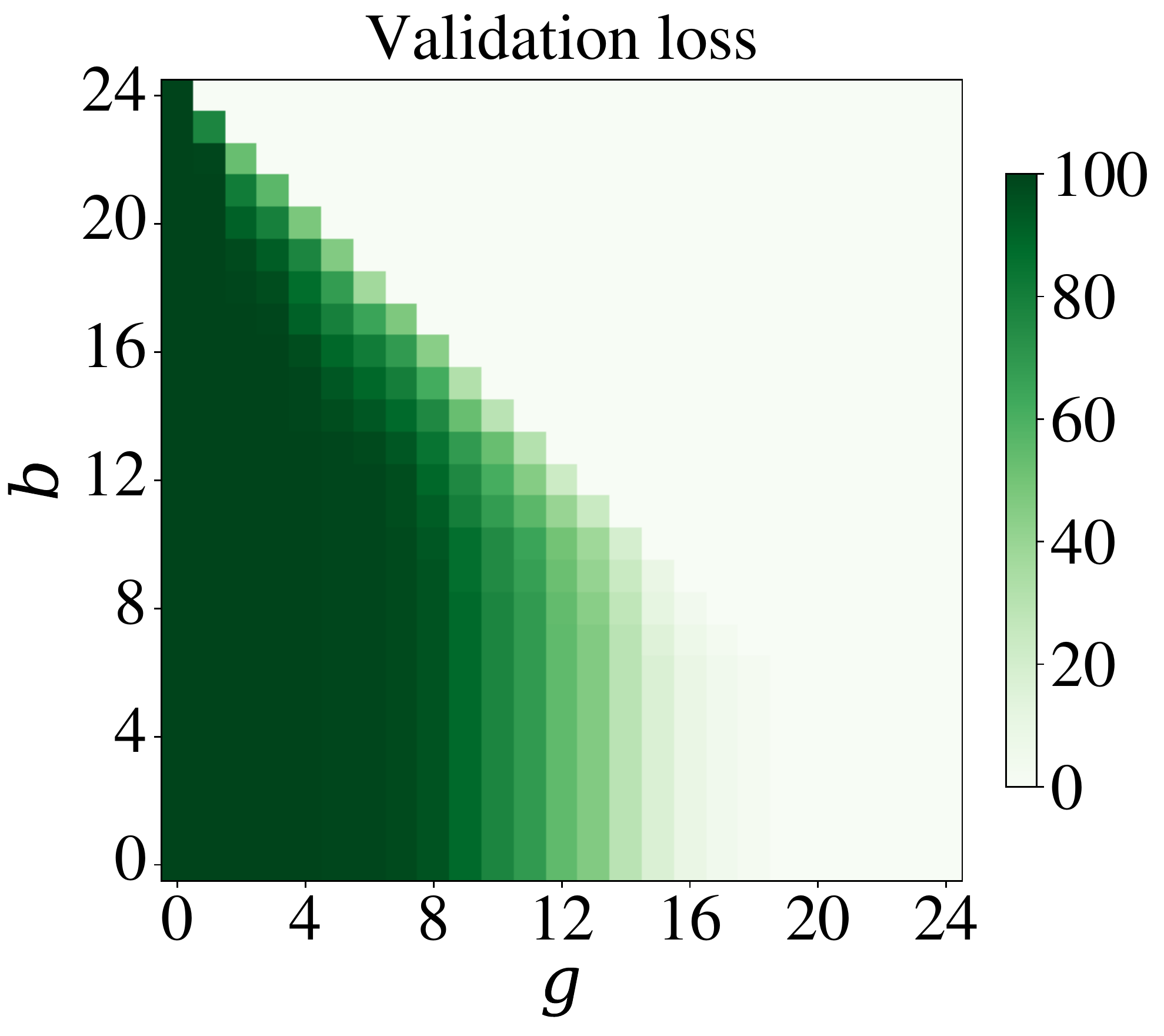}
  }
  \caption{%
    Additional visualizations for the `MNIST' data set.
  }
  \label{fig:MNIST additional}
\end{figure}

Due to space constraints and the large number of configurations that we
investigated for our early stopping experiments, this section  contains
additional plots that follow the same schematic: the top row shows  the
differences in accuracy and epoch for our measure when compared to  the
commonly-used validation loss. Each cell in the heatmap corresponds  to
a single configuration of $b$ and $g$.
In the heatmap of accuracy differences, blue represents parameter
combinations for which we obtain a \emph{higher} accuracy than
validation loss for the same parameters; white indicates combinations
for which we obtain the same accuracy, while red highlights combinations
in which our accuracy decreases.
Similarly, in the heatmap of epoch differences, green represents
parameter combinations for which we stop \emph{earlier} than validation
loss for the same parameter.
The scatterplots in Section~\ref{sec:Early stopping} show an `unrolled'
version of this heat map, making it possible to count how many parameter
combinations result in early stops while also increasing accuracy, for
example. The heatmaps, by contrast, make it possible to compare the
behaviour of the two measures with respect to each parameter combination.
Finally, the bottom row of every plot shows how many times each measure
was triggered for every parameter combination. We consider a measure to
be triggered if its stopping condition is satisfied prior to the last
training epoch. Due to the way the parameter grid is set up, no configuration
above the diagonal can stop, because $b + g$ would be larger than the
total number of training epochs. This permits us to compare the
`slopes' of cells for each measure. Ideally, each measure should
consist of a dark green triangle, as this would indicate that
\emph{parameter} configuration stops all the time.

\paragraph{MNIST} Please refer to Figures~\ref{fig:MNIST additional} and \ref{fig:MNIST scatterplot}.
The colours in the difference matrix of the top row are slightly skewed
because in a certain configuration, our measure loses $0.8\%$ of
accuracy when stopping. However, there are many other configurations in
which virtually no accuracy is lost and in which we are able to stop
more than four epochs earlier. The heatmaps in the bottom row again
indicate that neural persistence is capable of stopping for more
parameter combinations in general. We do not trigger as often for some
of them, though.

\begin{figure}
  \centering
  \iffinal
    \includegraphics{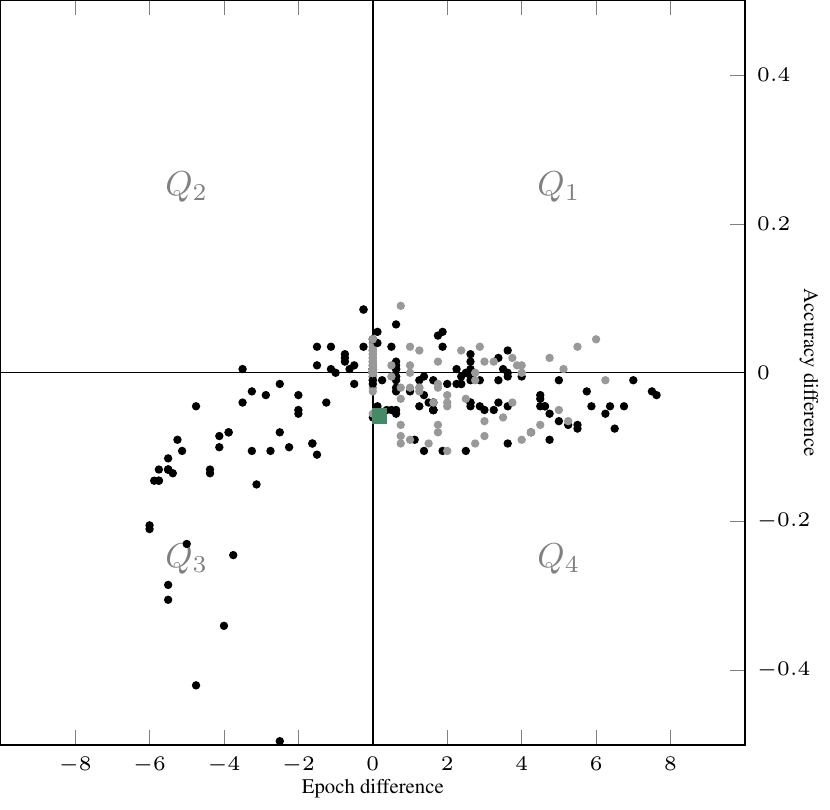}
  \else
    \input{figures/tikz/figure6.tex}
  \fi
  \caption{%
    Scatterplot of epoch and accuracy differences for `MNIST'.
  }
  \label{fig:MNIST scatterplot}
\end{figure}

\begin{figure}[p]
  \centering
  \subcaptionbox{Accuracy and epoch differences}{
    \includegraphics[height=4.5cm]{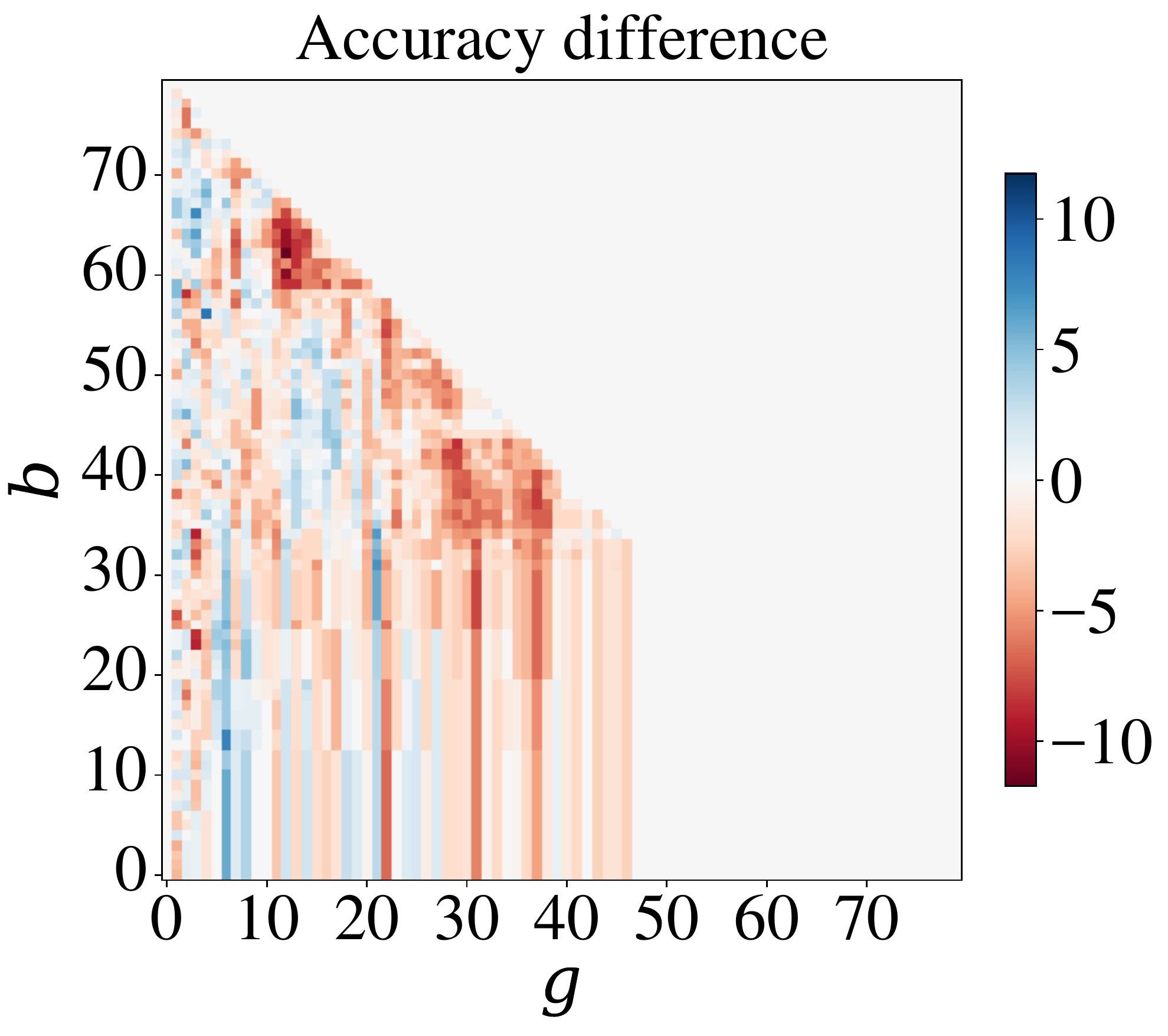}
    \includegraphics[height=4.5cm]{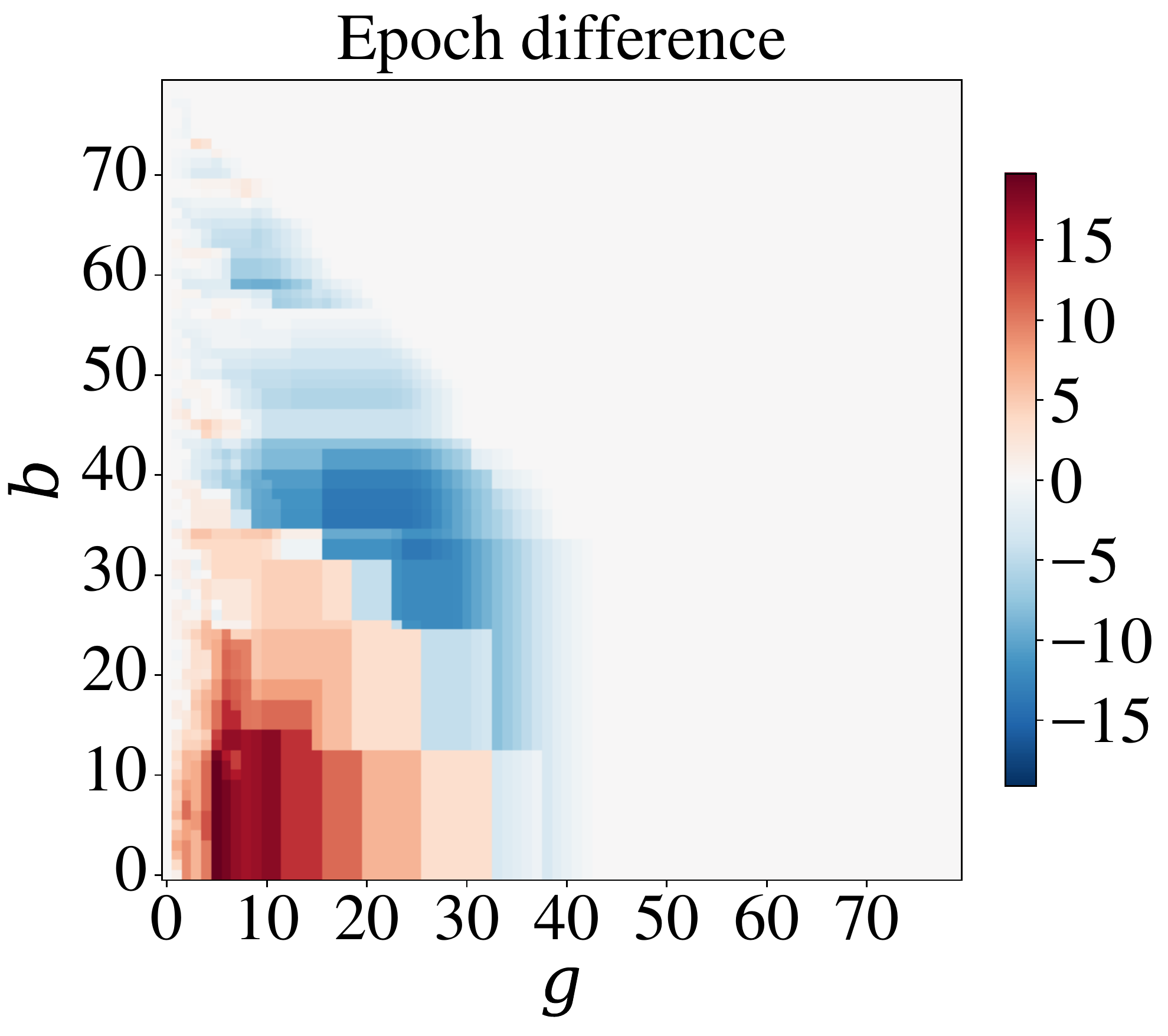}
  }
  \subcaptionbox{Number of triggers}{%
    \includegraphics[height=4.5cm]{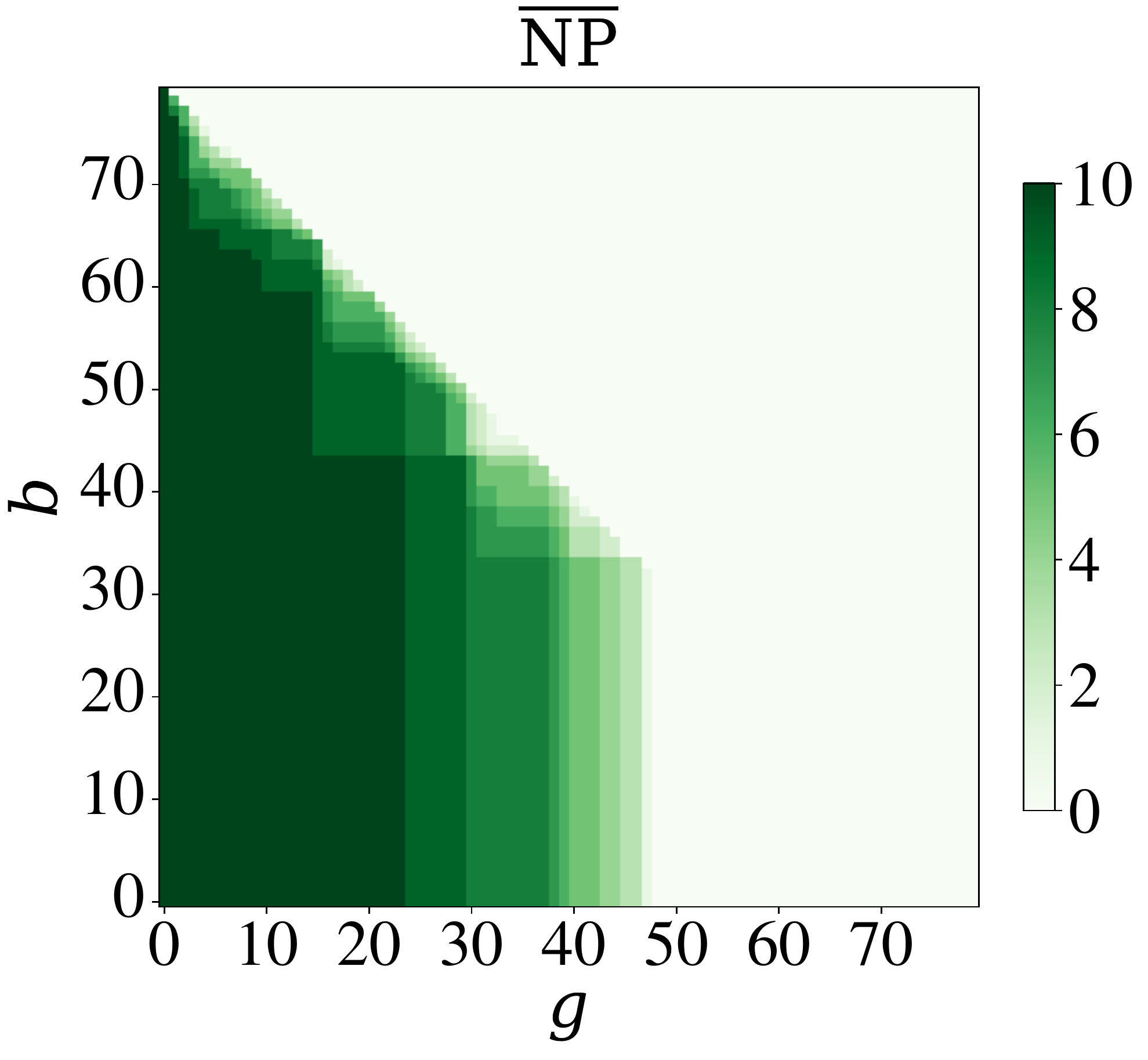}
    \includegraphics[height=4.5cm]{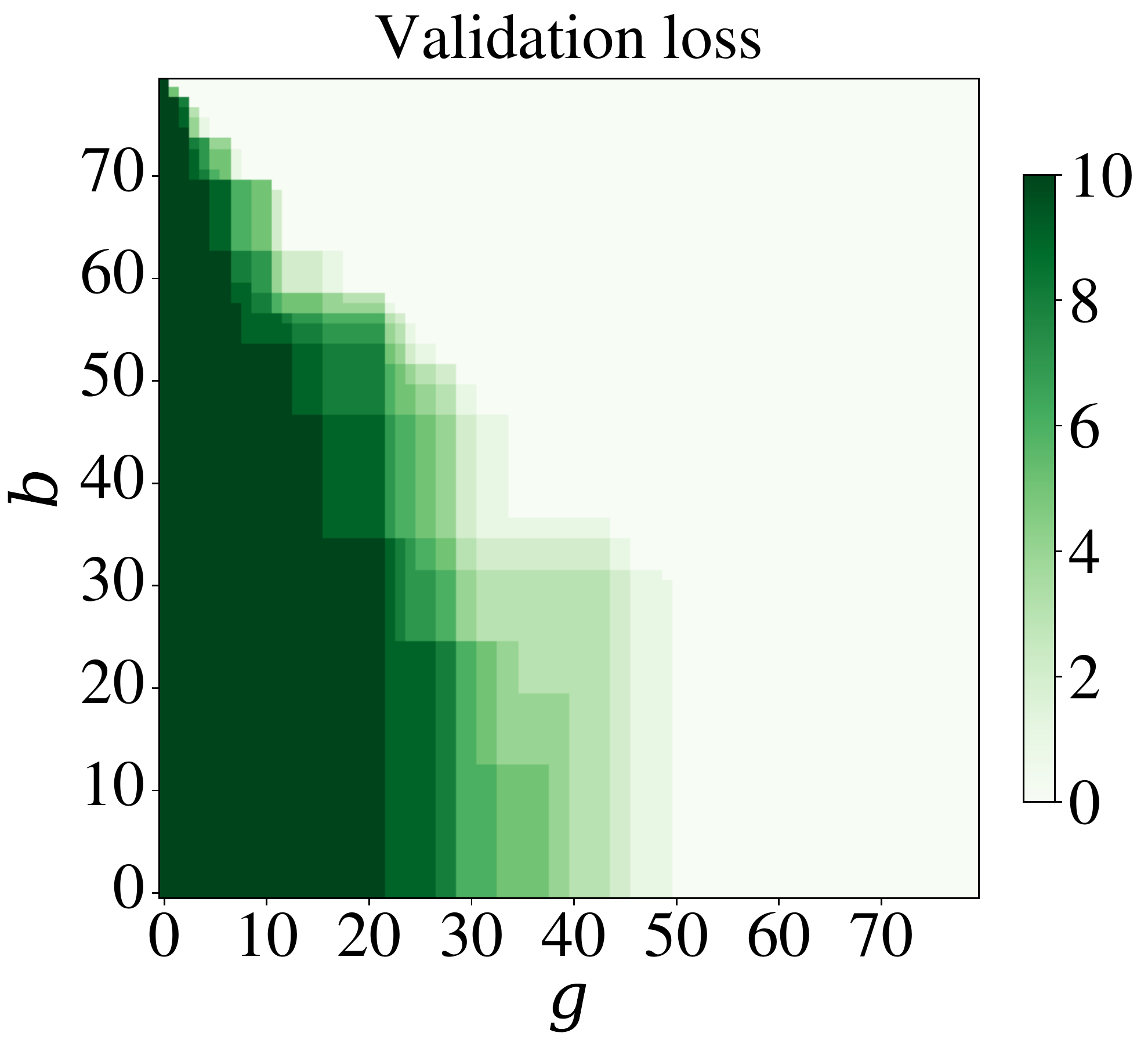}
  }
  \caption{%
    Additional visualizations for the `CIFAR-10' data set.
  }
  \label{fig:CIFAR additional}
\end{figure}

\begin{figure}[p]
  \centering
  \iffinal
    \includegraphics{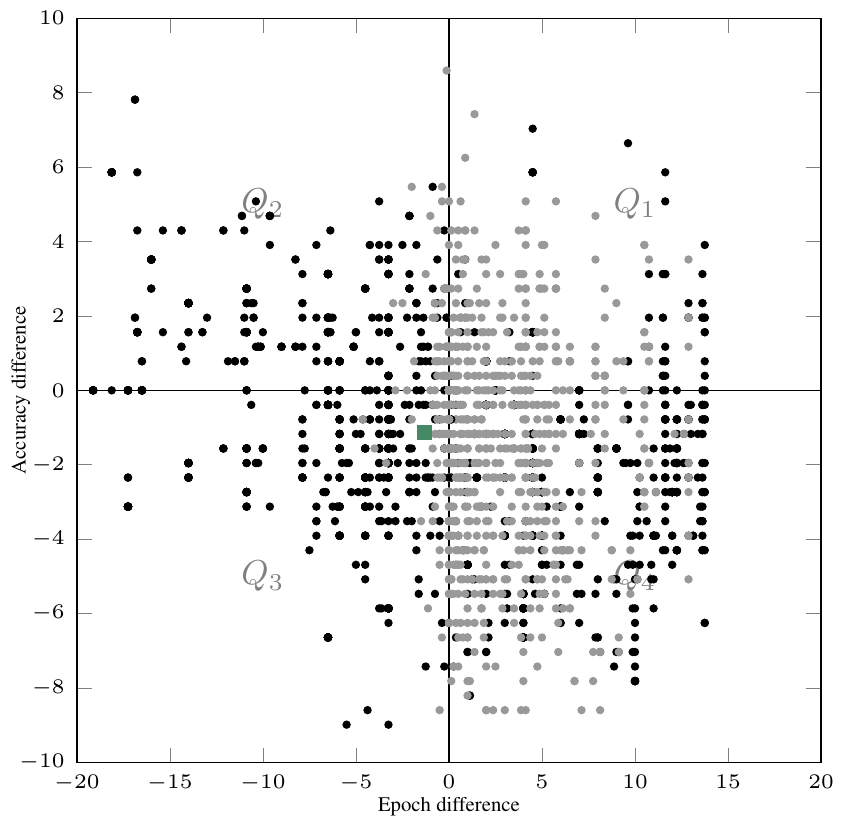}
  \else
    \input{figures/tikz/figure7.tex}
  \fi
  \caption{%
    Scatterplot of epoch and accuracy differences for `CIFAR-10'.
  }
  \label{fig:CIFAR scatterplot}
\end{figure}

\paragraph{CIFAR-10} Please refer to Figure~\ref{fig:CIFAR additional}.
In general, we observe that this data set is more sensitive with respect
to the parameters for early stopping. While there are several
configurations in which neural persistence stops with an increase of
almost $10\%$ in accuracy, there are also scenarios in which we cannot
stop training earlier, or have to train longer~(up to $15$ epochs out of
$80$ epochs in total).
The second row of plots shows our measure triggers reliably for more
configurations than validation loss. Overall, the scatterplot of all
scenarios~(Figure~\ref{fig:CIFAR scatterplot}) shows that most practical
configurations are again located in $Q_2$ and $Q_3$. While we may thus
find certain configurations in which we reliably outperform validation
loss as an early stopping criterion, we also want to point out that
our measures behaves correctly for many practical configurations. Points
in $Q_1$, where we train \emph{longer} and achieve a \emph{higher}
accuracy, are characterized by a high patience $g$ of approximately $40$
epochs and a low burn-in rate $b$, or vice versa. This is caused by the
training for CIFAR-10, which does not reliably converge for FCNs.
Figure~\ref{fig:Curves CIFAR vs. FMNIST} demonstrates this by showing
loss curves and the mean normalized neural persistence curves of five
runs over training~(loss curves have been averaged over all runs;
standard deviations are shown in grey; we show the first half of the
training to highlight the behaviour for practical early stopping
conditions).
For `Fashion-MNIST', we observe that $\meanneuralpersistence$
exhibits clear change points during the training process, which
can be exploited for early stopping. For `CIFAR-10', we observe
a rather incremental growth for some runs~(with no clearly-defined
maximum), making it harder to derive a generic early stopping criterion that
does not depend on fine-tuned parameters. Hence, we hypothesize that
neural persistence cannot be used reliably in scenarios where the
architecture is incapable of learning the data set.
In the future, we plan to experiment with deliberately selected `bad'
and `good' architectures in order to evaluate to what extent our
topological measure is capable of assessing their suitability for
training, but this is beyond the scope of this paper.

\paragraph{IMDB} Please refer to Figure~\ref{fig:IMDB additional}.
For this data set, we observe that most parameter configurations result
in \emph{earlier} stopping~(up to two epochs earlier than validation
loss), with accuracy increases of up to $0.10\%$. This is also shown in
the scatterplot~\ref{fig:IMDB scatterplot}. Only a single configuration,
viz.\ $g = 1$ and $b = 0$, results in a severe loss of accuracy; we
removed it from the scatterplot for reasons of clarity, as its accuracy
difference of $-21\%$ would skew the display of the remaining
configurations too much~(this is also why the legends do not include
this outlier).

\begin{figure}[tbp]
  \subcaptionbox{CIFAR-10}{%
    \includegraphics[width=\textwidth]{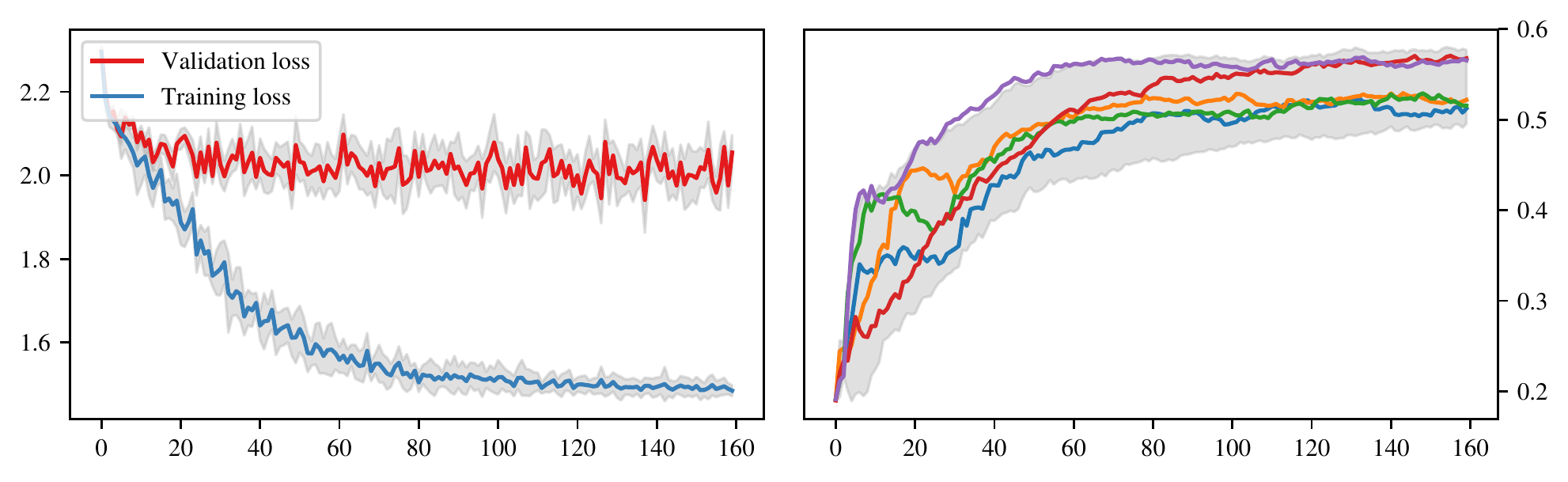}
  }
  \subcaptionbox{Fashion-MNIST}{%
    \includegraphics[width=\textwidth]{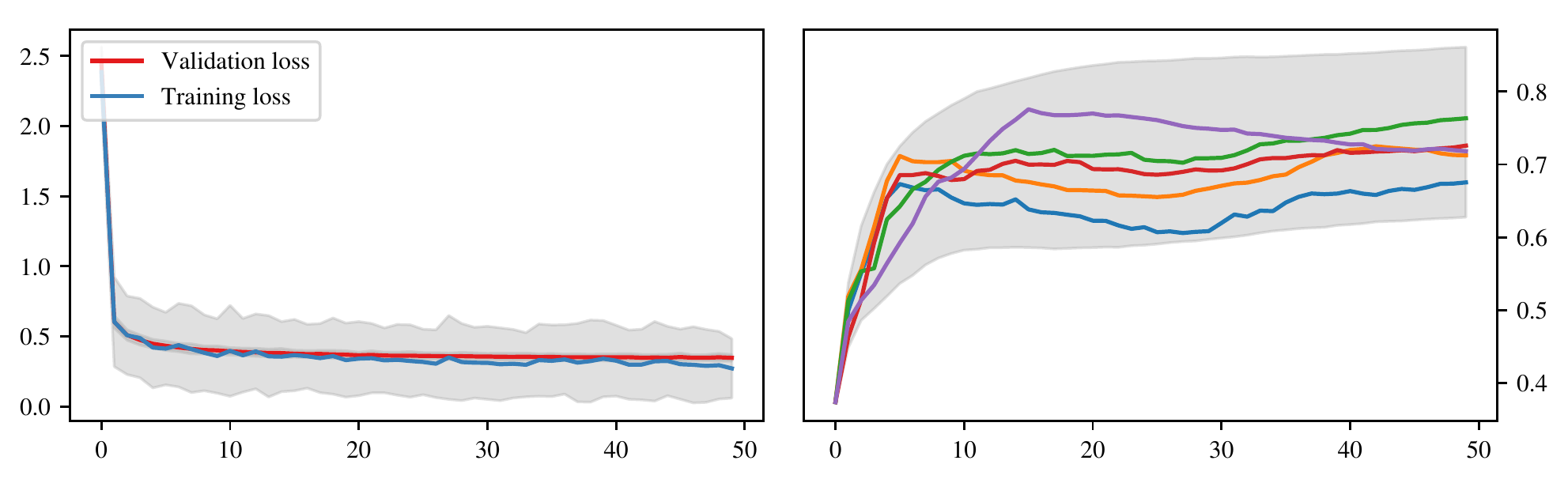}
  }
  \caption{
    A comparison of mean normalized neural persistence curves that we
    obtain during the training of `CIFAR-10' and `Fashion-MNIST'.
  }
  \label{fig:Curves CIFAR vs. FMNIST}
\end{figure}

\begin{figure}[p]
  \centering
  \subcaptionbox{Accuracy and epoch differences}{
    \includegraphics[height=4.5cm]{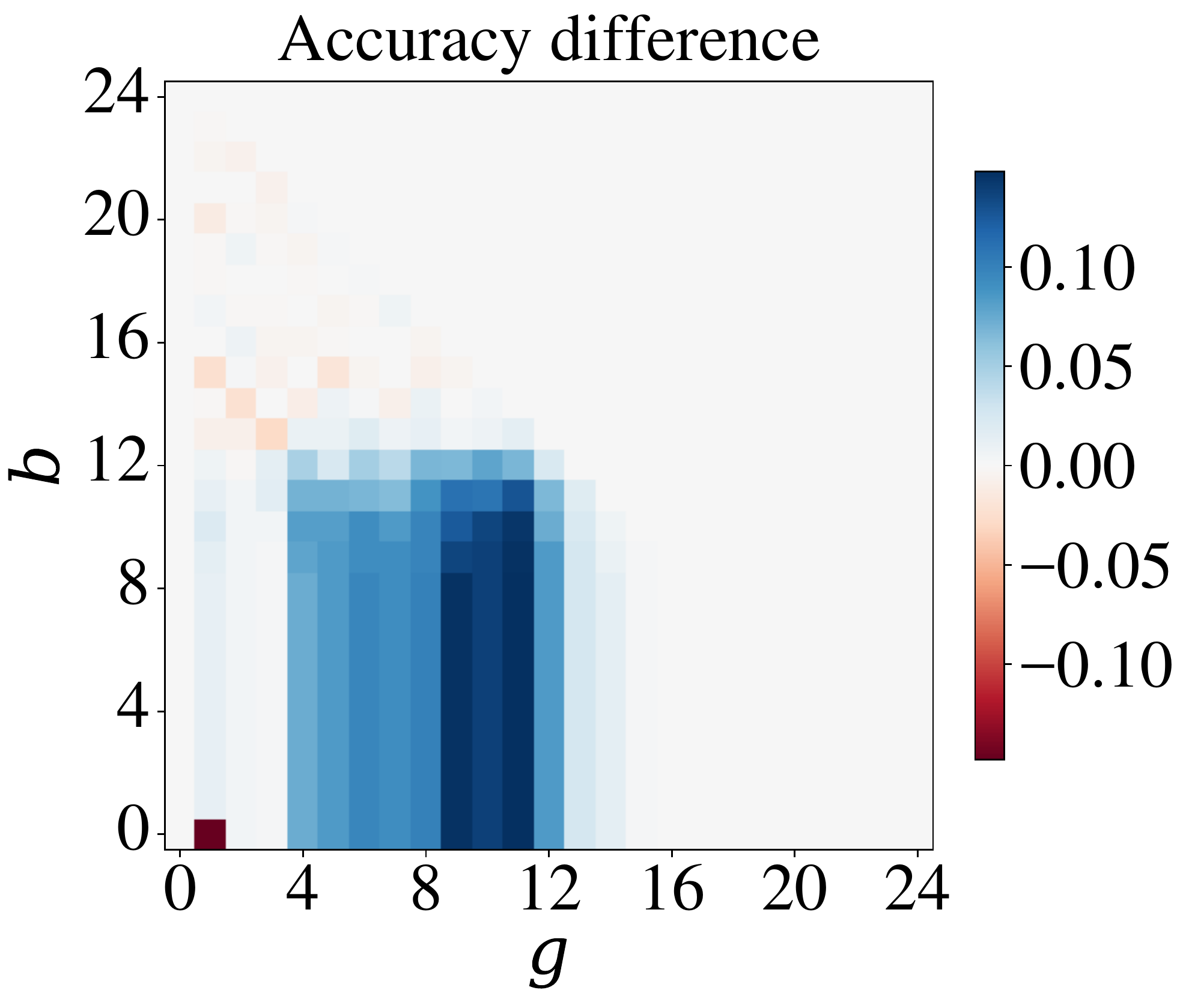}
    \includegraphics[height=4.5cm]{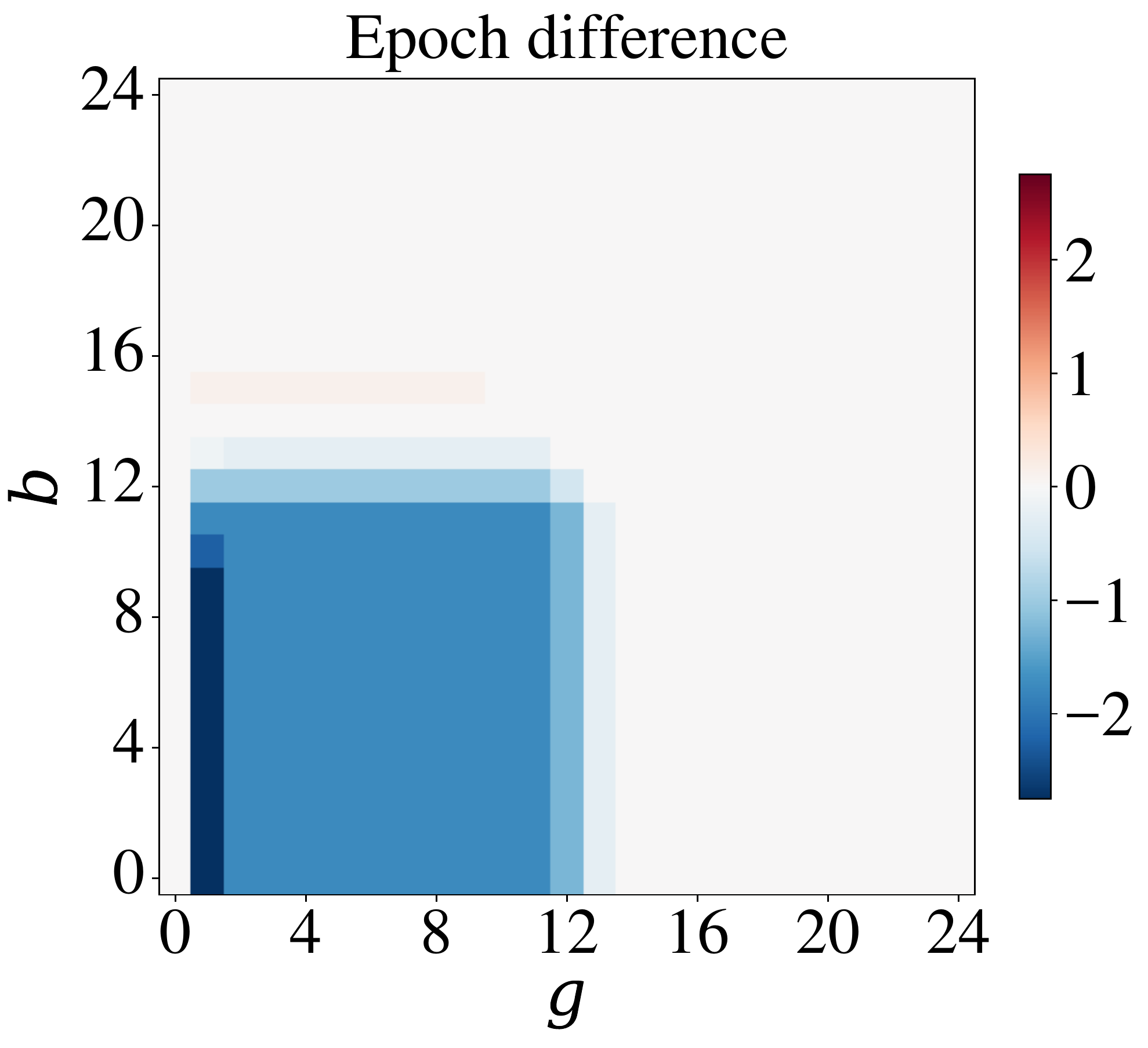}
  }
  \subcaptionbox{Number of triggers}{%
    \includegraphics[height=4.5cm]{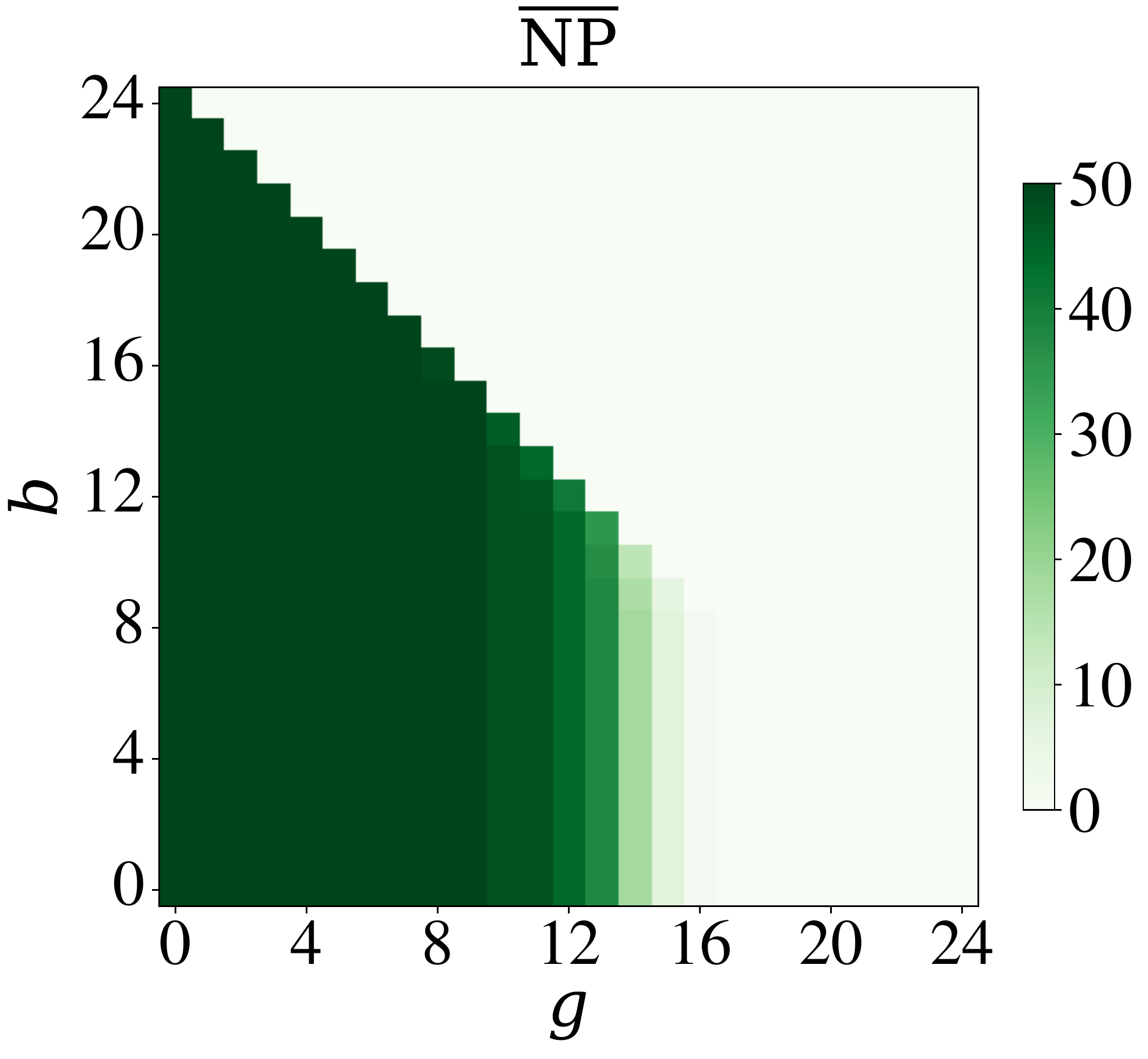}
    \includegraphics[height=4.5cm]{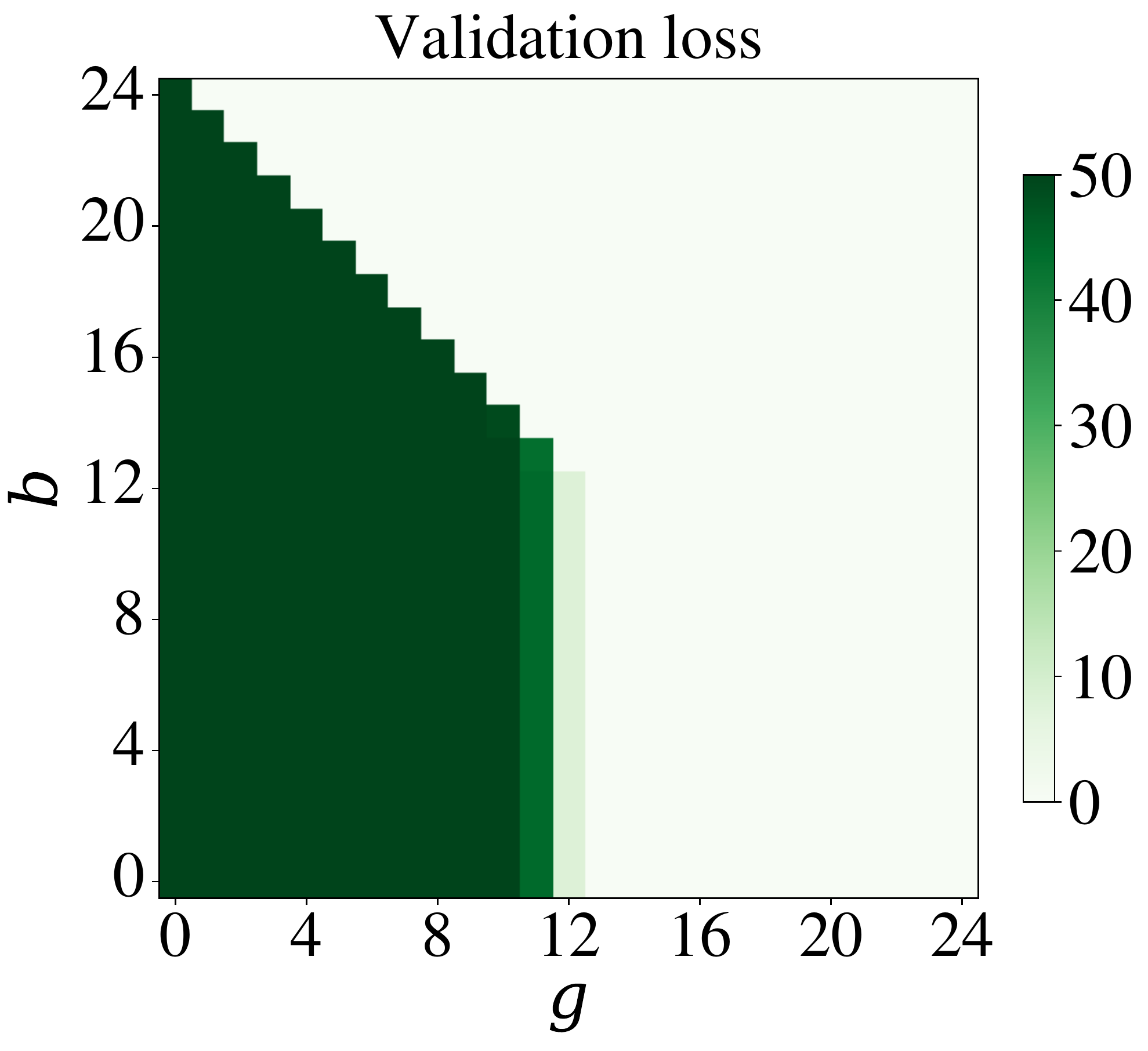}
  }
  \caption{%
    Additional visualizations for the `IMDB' data set.
  }
  \label{fig:IMDB additional}
\end{figure}

\begin{figure}[p]
  \centering
  \iffinal
    \includegraphics{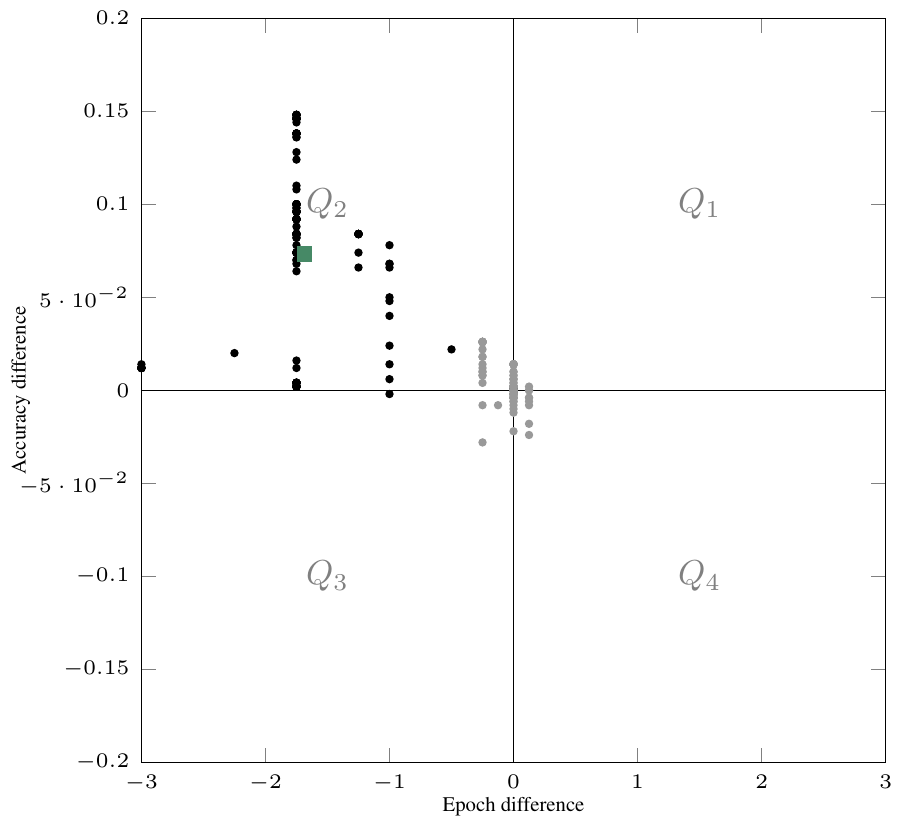}
  \else
    \input{figures/tikz/figure8.tex}
  \fi
  \caption{%
    Scatterplot of epoch and accuracy differences for `IMDB'.
  }
  \label{fig:IMDB scatterplot}
\end{figure}

\clearpage

\subsection{Neural Persistence for Convolutional Layers}\label{sec:cnn}

In principle, the proposed filtration process could be applied to any
bipartite graph. Hence, we can directly apply our framework to
convolutional layers, provided we represent them properly. 
Specifically, for layer $l$ we represent the convolution of its $i$th
input feature map $a_i^{(l-1)} \in \real^{h_{\textrm{in}} \times
w_{\textrm{in}} }$ with the $j$th filter $H_j \in \real ^{p \times q}
$ as one bipartite graph $G_{i,j}$ parametrized by a sparse weight
matrix $W_{i,j}^{(l)} \in \real^{ ( h_{\textrm{out}} \cdot w_{\textrm{out}}
) \times  ( h_{\textrm{in}} \cdot w_{\textrm{in}} ) }$, which in each
row contains the $p \cdot q$ unrolled values of $H_j$ on the diagonal,
with $h_{\textrm{in}} - p $ zeros padded in between after each $p$
values of $\textrm{vec}(H_j)$.
This way, the flattened pre-activation can be described as
$\textrm{vec}(z_{i,j}^{(l)}) = W_{i,j}^{(l)} \cdot
\textrm{vec}(a_{i}^{(l-1)})  + b_{i,j}^l \cdot
\mathbbm{1}_{(h_{\textrm{out}} \cdot w_{\textrm{out}}) \times 1} $. 

Since flattening does not change the topology of our bipartite graph, we
compute the normalized neural persistence on this sparse weight matrix
$W_{i,j}^{(l)}$ as the unrolled analogue of the fully-connected
network's weight matrix. Averaging over all filters then gives
a per-layer measure, similar to the way we derived mean normalized
neural persistence in the main paper.

When studying the unrolled adjacency matrix $W_{i,j}^{(l)}$, it becomes
clear that the edge filtration process can be approximated in a closed
form.  Specifically, for $m $ and $ n$ input and output neurons we
initialize $\tau = m + n $ connected components. When using zero padding,
the additional dummy input neurons have to included in $m$. For all
$\tau$ tuples in the persistence diagram the creation event $c=1$.
Notably, each output neuron shares the same set of edge weights.

Due to this, the destruction events---except for a few special
cases---simplify to a list of length $\tau$ containing the largest
filter values~(each value is contained $n$ times) in descending order
until the list is filled.
This simplification of neural persistence of a convolution with one
filter is shown as a closed expression in Equations~\ref{eq:conv_np1}--\ref{eq:conv_np3},
and our implementation is sketched in Algorithm~\ref{alg:NPconv}. We thus obtain
\begin{align}
  \neuralpersistence(G_{i,j}) &= \norm{\mathds{1} - \widetilde{\mathbf{w}} }_p,\label{eq:conv_np1}\\
\shortintertext{where we use}
  \norm{\widetilde{\mathbf{w}}}_p &\leq \norm{ \left( 0, \mathbf{w}_{c}^T, \mathbf{w}_{\bar{c}, \phi}^T, \textrm{vec}(A_{\phi})^T,  \textrm{vec}(B_{\phi})^T\ \right)^{T} }_p, \label{eq:conv_np2}\\
\shortintertext{with}
  \phi &= \tau-\textrm{dim}(\mathbf{w}_{c})-1,\\
  A_x  &= \mathbf{w}_{1:\floor*{\frac{x}{n}}} \otimes \mathds{1}_{n-1},\\
  B_y  &= \mathbf{w}_{\floor*{\frac{y}{n}} +1 } \otimes \mathds{1}_{y \ \textrm{mod} \ n},\label{eq:conv_np3}
\end{align}%
where $\mathds{1}_{0} \coloneqq 0$.
Following this notation, Equation~\ref{eq:conv_np1} expresses neural
persistence of the bipartite graph $G_{i,j}$, with $\widetilde{\mathbf{w}}$
denoting the vector of selected weights~(i.e.\ the destruction events)
when calculating the persistence diagram.
We use $\mathbf{w}$ to denote the flattened and sorted weight values~(in
descending order) of the convolutional filter $H_j$, while $\mathbf{w}_{c}$
represents the vector of all weights that are located in a corner of
$H_j$, whereas $\mathbf{w}_{\bar{c}, \phi}$ is the vector of all weights
which do \emph{not} originate from the corner of the filter while still
belonging to the first~(and thus \emph{largest}) $\floor*{\frac{\phi}{n}}$
weights in $\mathbf{w}$, which we denote by $\mathbf{w}_{1:\floor*{\frac{\phi}{n}}}$.

For the subsequent experiments~(see below), we use a simple CNN that
employs $32 + 2048$ filters. Hence, by using the shortcut described above,
we do not have to unroll 2080 weight matrices explicitly, thereby
gaining \emph{both} in memory efficiency and run time, as compared to
the naive approach: on average, a naive exact computation based on
unrolling required \SI{8.77}{\second} per convolutional filter and
evaluation step, whereas the approximation only took about
\SI{0.00038}{\second} while showing very similar behaviour up to
a constant offset.

\begin{algorithm}[tbp]
	\caption{Approximating Neural Persistence of Convolutions per filter}
	\label{alg:NPconv}
	\algorithmicrequire{} filter $H \in \real^{p \times q} $; number of input and output neurons as $m,  n$
	\begin{algorithmic}[1]
		\State  $\mathcal{T} \gets \emptyset$ 									\Comment{Initialize set of tuples for persistence diagram}
		\State  $\tau \gets m+n, \ \ t \gets 0,  \ \ i \gets 0 $ \Comment{Initialize number of tuples, tuple counter, weight index}  		
		\State $h_{\max} \gets \max_{h \in H} |h|$                     \Comment{Determine largest absolute weight}
		
		\State $H' \gets \{ |h| / h_{\max} \mid h \in H \}$           \Comment{Transform weights for filtration} \label{pcline:c_init}
		
		\State $s \gets \textrm{sort}(\textrm{vec}(H')) $ \Comment{Sort weights in descending order}
		\State $H'_c \gets \{h'_{0,0}, h'_{0,q-1}, h'_{p-1,0}, h'_{p-1,q-1}\} $  \Comment{Determine the set of all corner weights of filter $H'$}
		\State $\mathcal{T} \gets (1, 0 ) , \ \ t \gets t+1 $   \Comment{Add tuple for surviving component}
		\For{$h'_{c} \in H'_c $} \Comment{Each corner of $H'$ merges components}
		\State {$\mathcal{T} \gets (1, h'_{\textrm{c}} ) , \ \ t \gets t+1 $  } 
		\EndFor

		\While{$ 1 $} \Comment{Create the remaining tuples (Approximation step)}
		\State $n' = n - \textrm{Ind}(s[i] \in H'_c)$ \Comment{if current weight is a corner weight, write one less tuple}
		\If{$t + n' \leq \tau $}  \Comment{if there are at least $n'$ more tuples, set their merge value to $s[i] $ }
		\State{\textbf{repeat} $n'$ times} 
		\State\hspace{\algorithmicindent}{$\mathcal{T} \gets (1, s[i]) $ } \Comment{approximative as $s[i]$ does not always add $n'$ merges due to loops}
		\State{$ t \gets t + n', \ \  i \gets i+1 $}
		\Else \Comment{otherwise, process the remaining tuples similarly }
		\State{\textbf{repeat} $(\tau-t)$ times} 
		\State\hspace{\algorithmicindent}{$\mathcal{T} \gets (1, s[i]) $ } 
		\State{\textbf{break}}
		\EndIf{}
		\EndWhile{}

		\State \textbf{return} $ \norm{\mathcal{T}}_p  $  \Comment{Compute norm of approximated persistence diagram}
	\end{algorithmic}
\end{algorithm}


For our experiments, we used an off-the-shelf `LeNet-like' CNN model
architecture~(two convolutional layers each with max pooling and ReLU,
1 fully-connected and softmax) as described in \citet{tensorflowConv15}.
We trained the model on `Fashion-MNIST' and included this setup in the
early stopping experiments~(100 runs of 20 epochs).
In Figure~\ref{fig:conv additional}, we observe that stopping based on
the neural persistence of a convolutional layer typically \emph{only}
incurs a considerable loss of accuracy: given a final test accuracy of
$91.73\pm0.13$, stopping with this naive extension of our measure
reduces accuracy by up to 4\%.
Furthermore, in contrast to early stopping on a fully-connected
architecture, we do not observe any parameter combinations that stop
early \emph{and} increase accuracy. In fact, there is no configuration
that results in an increased accuracy. This empirically confirms our
theoretical scepticism towards naively applying our edge-focused
filtration scheme to CNNs.

\begin{figure}[tbp]
  \centering
  \subcaptionbox{Accuracy and epoch differences}{
    \includegraphics[height=4cm]{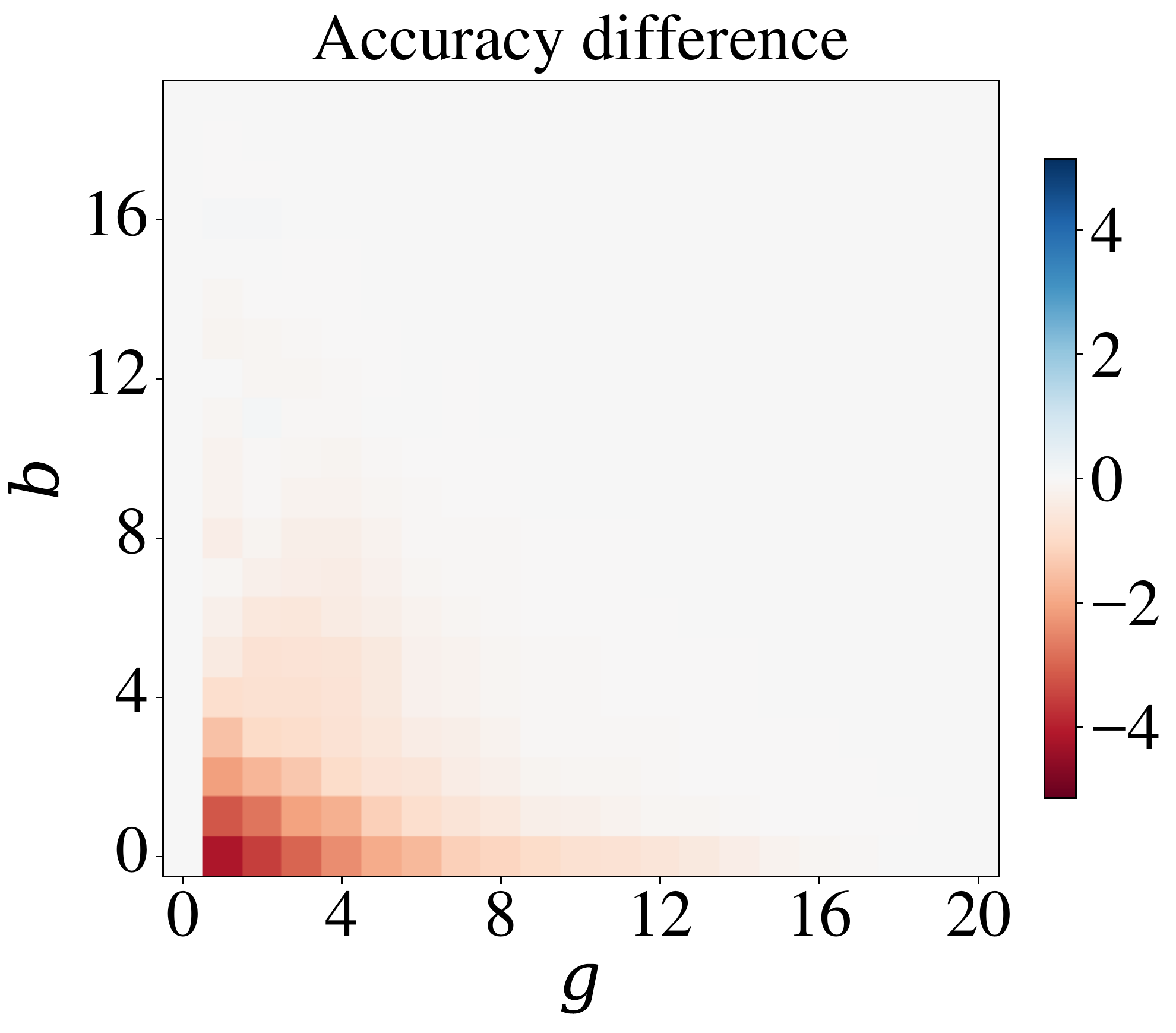}
    \includegraphics[height=4cm]{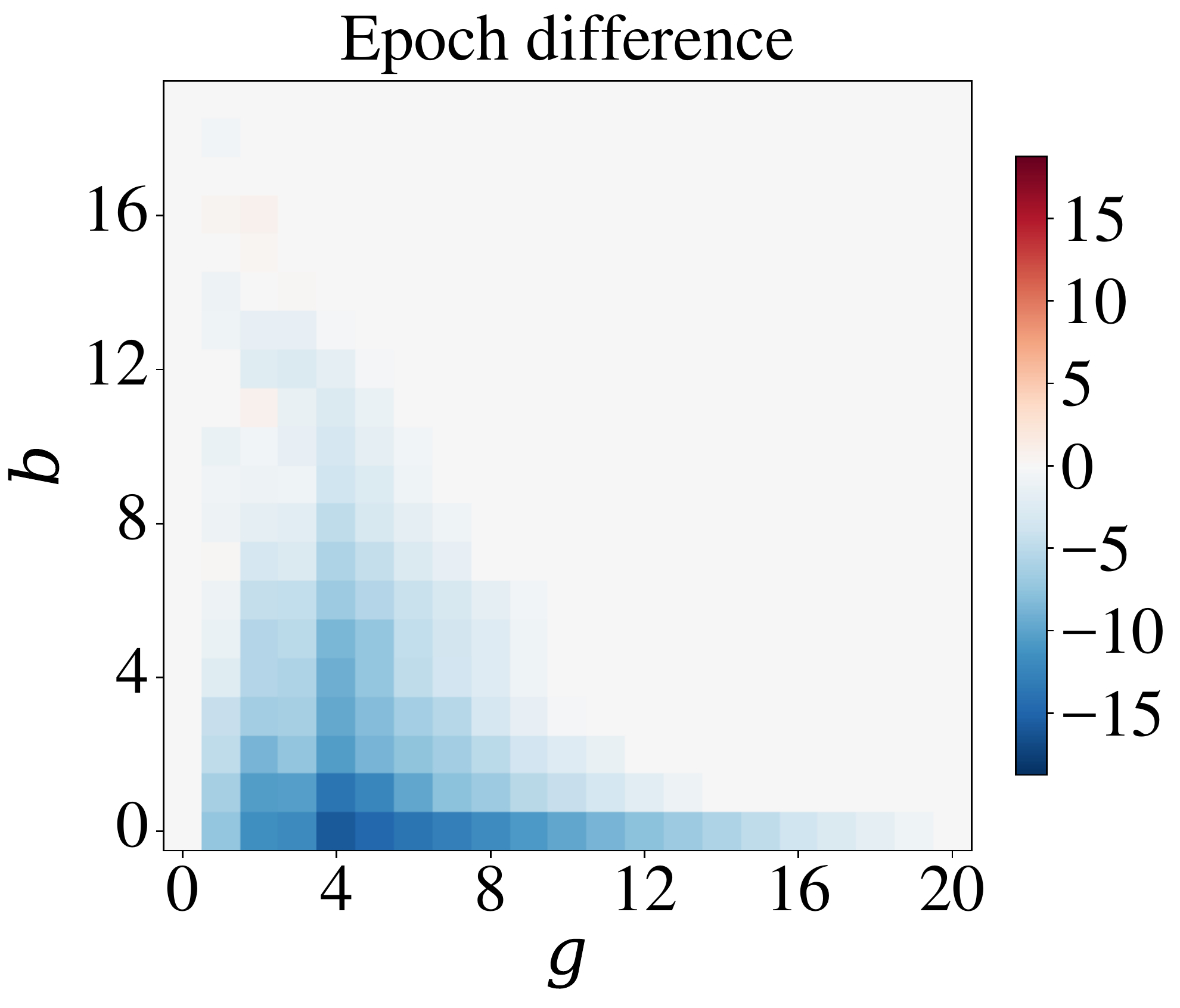}
  }
  \subcaptionbox{Number of triggers}{
    \includegraphics[height=4cm]{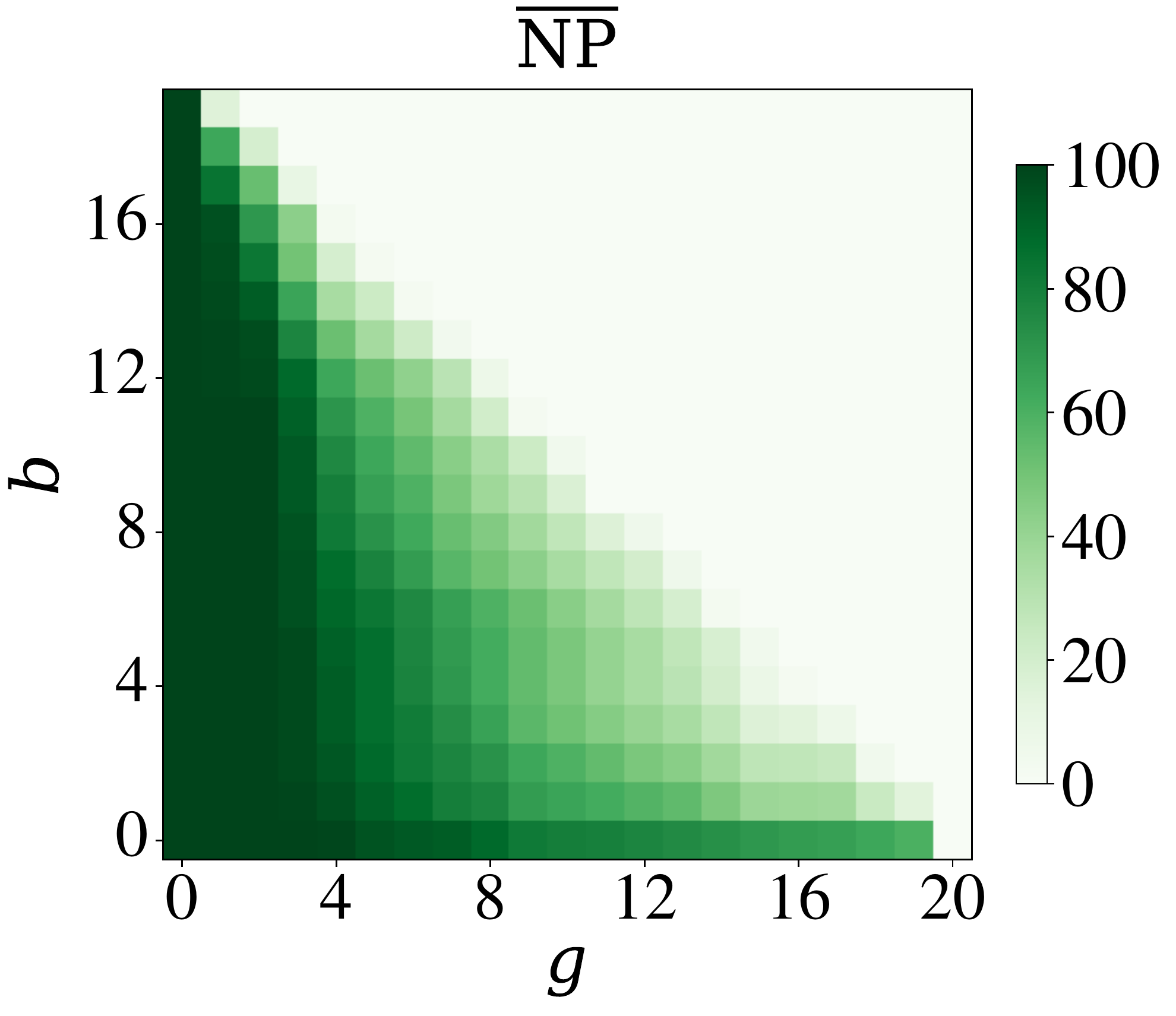}
    \includegraphics[height=4cm]{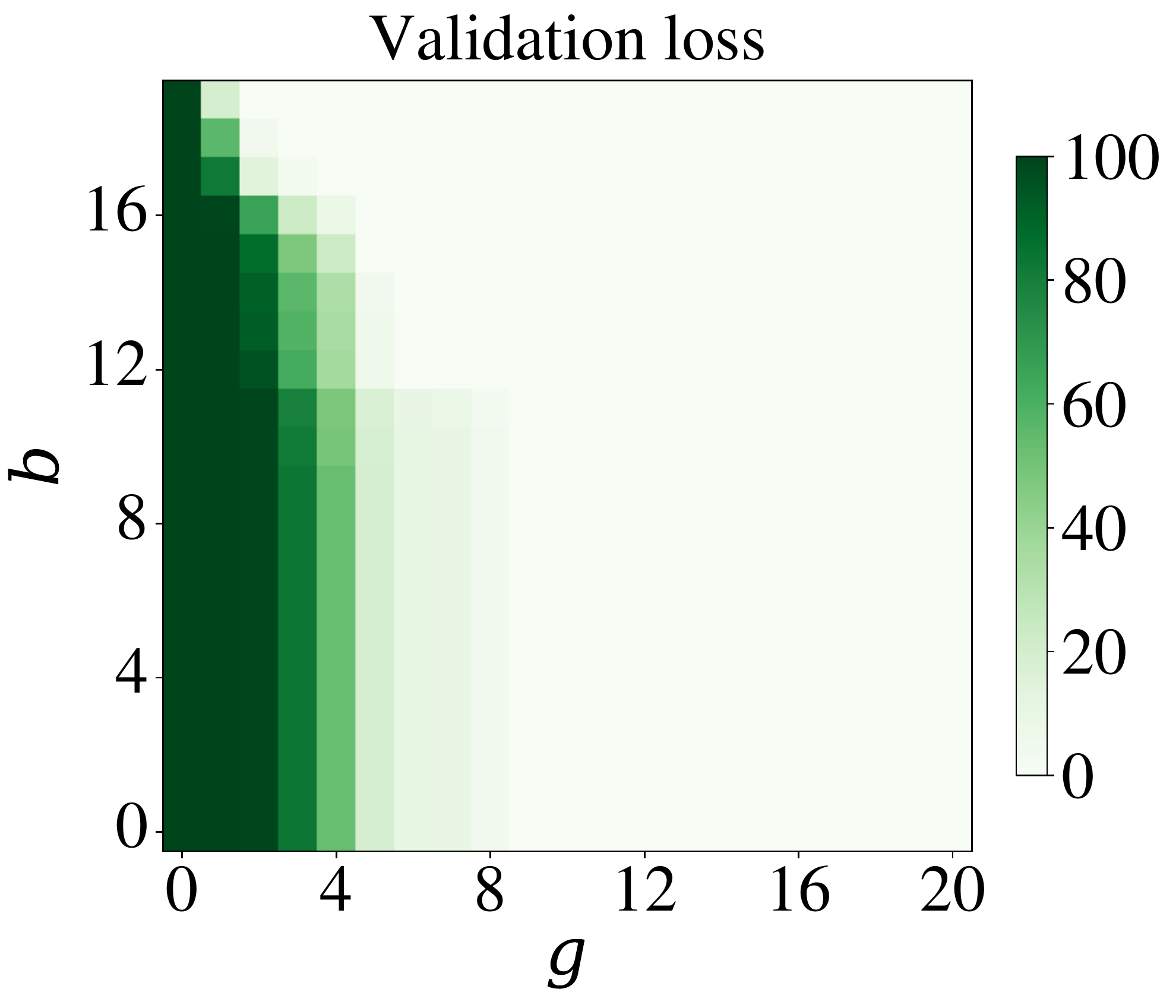}
  }
  \caption{%
    Additional visualizations for the `Fashion-MNIST' data set,
    following the preliminary examination of convolutional layers.
    Here, the approximated neural persistence calculation for the first
    convolutional layer was used. However, we also ran few runs of the
    same experiment using the exact method which showed the same
    results. Employing the second convolutional layer or both did not
    improve this result.
  }
  \label{fig:conv additional}
\end{figure}

\subsection{Relationship between neural persistence and validation accuracy}\label{sec:val}

Motivated by Figure~\ref{fig:Neural persistence regimes}, which shows
the different `regimes' of neural persistence for a perceptron network,
we investigate a possible correlation of~(high) neural persistence
with~(high) predictive accuracy.
For deeper networks, we find that neural persistence measures
structural properties that arise from different parameters~(such as training
procedures or initializations), and \emph{no} correlation can be observed.

For our experiments, we constructed neural networks
with a \emph{high} neural persistence prior to training.
More precisely, following the theorems in this paper, we initialized most weights
of each layer with very low values and reserved high values for very few weights.
This was achieved by sampling the weights from a \emph{beta distribution} with
$\alpha=0.005$ and $\beta=0.5$.
Using this procedure, we are able to initialize~[20,20,20]
networks with $\meanneuralpersistence \approx 0.90 \pm 0.003$ compared to the
same networks that have $\meanneuralpersistence \approx 0.38 \pm 0.004$ when
initialized by Xavier initialization.
The mean validation accuracy of these untrained networks on the
`Fashion-MNIST' data set is $0.10 \pm 0.01$ and $0.09 \pm 0.03$,
respectively.

Figure~\ref{fig:np init} depicts how both types of networks converge to
similar regimes of validation accuracy, while the mean normalized neural
persistence achieved at the end of the training varies. For networks initialized with high $\meanneuralpersistence$~(Figure~\ref{fig:np init}, left) the validation accuracy of networks with final $0.9 \leq
\meanneuralpersistence \leq 0.95$ ranges from $0.098$~(not shown) to $0.863$.
For Xavier initialization~(Figure~\ref{fig:np init}, right), the lack of correlation can also be observed. Furthermore, comparing the two plots, there are no clear advantages in initializing networks with high $\meanneuralpersistence$. This observation further motivates the proposed \emph{early stopping criterion}, which checks for
\emph{changes} in the $\meanneuralpersistence$ value, and considers
stagnating values to be indicative of a trained network.

\begin{figure}[tbp]
  \centering
  \includegraphics[width=\textwidth]{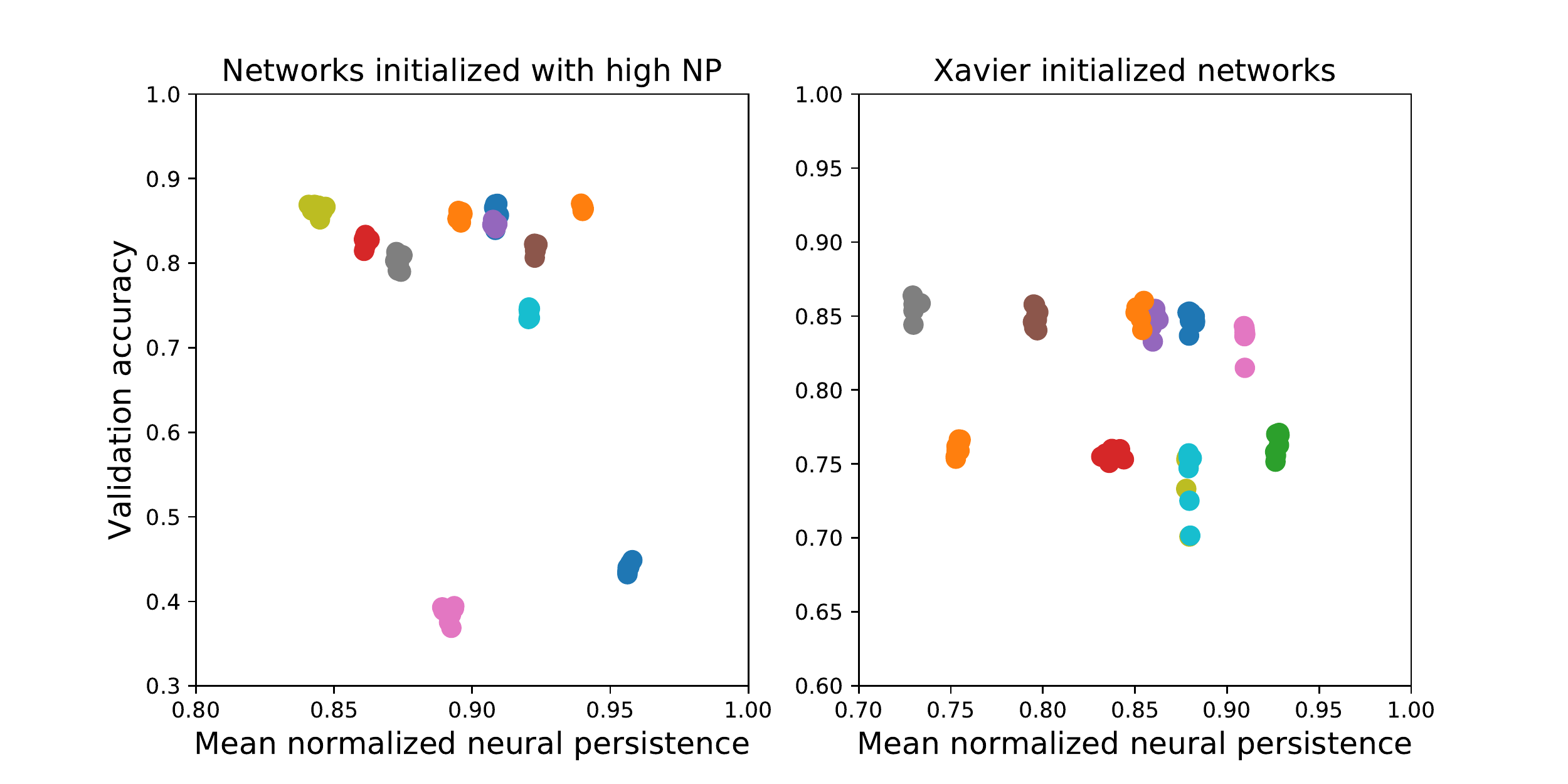}
  \caption{%
    Each cluster of points represent the last two training
    epochs~(sampled every quarter epoch) of a [20,20,20] network trained
    on the `Fashion-MNIST' data set.
    We observe no correlation between validation accuracy and normalized
    total persistence
  }
  \label{fig:np init}
\end{figure}

\clearpage

\subsection{Neural persistence for different data distributions and deeper FCN architectures}\label{sec:datadistrib}

Neural persistence captures information about different data
distributions during training. The weights tuned via backpropagation are
directly influenced by the input data~(as well as their labels) and
neural persistence tracks those changes.
To demonstrate this, we trained the same architecture~, i.e.\ $[50,50,20]$, on two
data sets with the same dimensions but different properties: MNIST and
`Fashion-MNIST'.
Each data set has the same image size~($28 \times  28$ pixels, one channel)
but lay on different manifolds.
Figure~\ref{fig:np_data_distr_and_deep} (left) shows a histogram of the mean normalized
neural persistence~($\meanneuralpersistence$) after $25$ epochs of training
over $100$ different runs.
The distributions have a similar shape but are shifted, indicating that
the two datasets lead the network to different topological regimes.

We also investigated the effect of depth on neural persistence. We
selected a fixed layer size~(20 hidden units) and increased the number
of hidden layers. Figure~\ref{fig:np_data_distr_and_deep} (right) depicts the boxplots of
mean $\meanneuralpersistence$ for multiple architectures
after 15 epochs of training on MNIST.
Adding layers initially increases the variability of
$\meanneuralpersistence$ by enabling the network to converge to
different regimes~(essentially, there are many more valid configurations
in which a trained neural network might end up in).
However, this effect is reduced after a certain depth: networks with
deeper architectures exhibit less variability in $\meanneuralpersistence$.

\begin{figure}[tbp]
  \centering
  \includegraphics[width=0.49\textwidth]{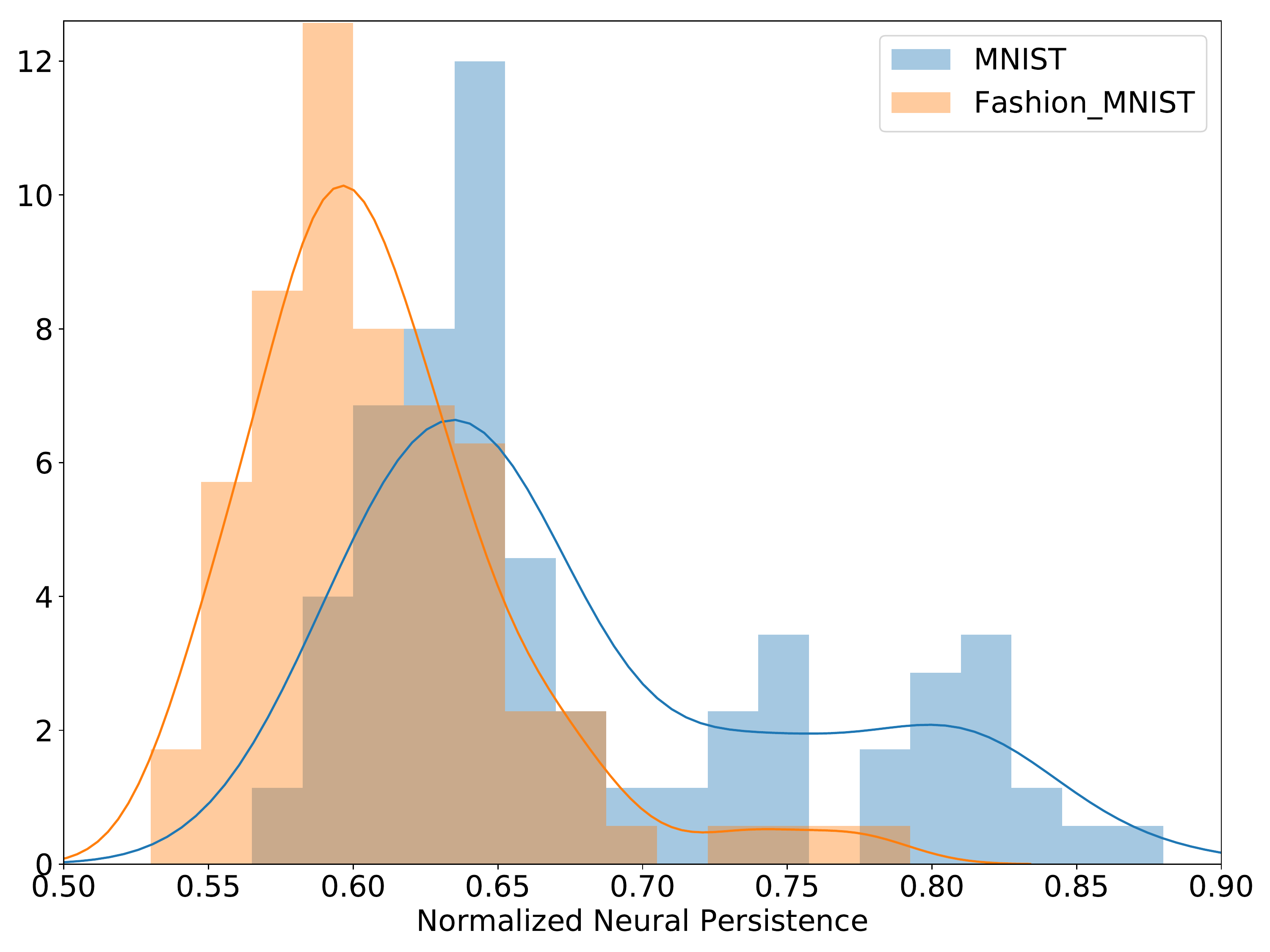}
  \includegraphics[width=0.49\textwidth]{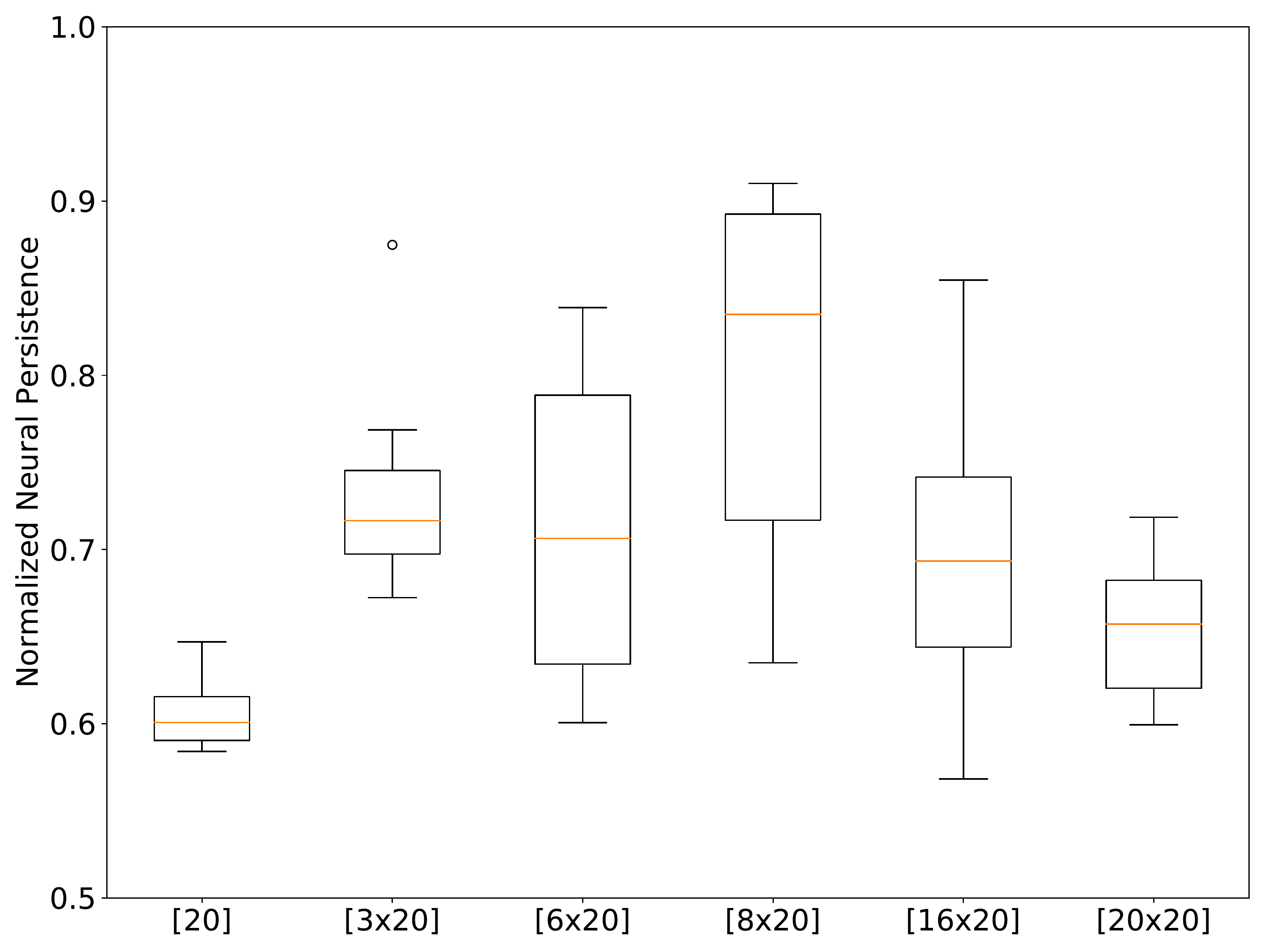}
  \caption{(left) Histogram of the final normalized neural persistence of a $[50,50,20]$ network for 100 runs and 25 epochs of training. (right) Normalized neural persistence after 15 epochs of training on MNIST for different architectures with increasing depth. Deeper architectures are denoted as $[n\times20]$ where $n$ is the number of hidden layers.}
  \label{fig:np_data_distr_and_deep}
\end{figure}

\clearpage

\subsection{Early stopping in data scarcity scenarios}\label{sec:extreme}

Labelled data is expensive in most domains of interest, which results in small data sets or low quality of the labels.
We investigate the following experimental set-ups: 
\begin{inparaenum}[(1)]
\item Reducing the training data set size and
\item Permuting a fraction of the training labels.
\end{inparaenum}
We train a fully connected network ($[500,500,200]$ architecture) on `MNIST' and `Fashion-MNIST'.
In the experiments, we compare the following measures for stopping the training:
\begin{inparaenum}[i)]
\item Stopping at the optimal test accuracy.
\item Fixed stopping after the burn in period.
\item Neural persistence patience criterion.
\item Training loss patience criterion.
\item Validation loss patience criterion.
\end{inparaenum}
For a description of the patience criterion, see Algorithm~\ref{alg:Early stopping}. All measures, except validation loss, include the validation datasets ($20 \%$) in the training process to simulate a larger data set when no cross-validation is required. We report the accuracy on the non-reduced, non-permuted test sets. The batch size is $32$ training instances. The stopping measures are evaluated every quarter epoch.

\begin{figure}[p]
\centering
\includegraphics[width=\textwidth]{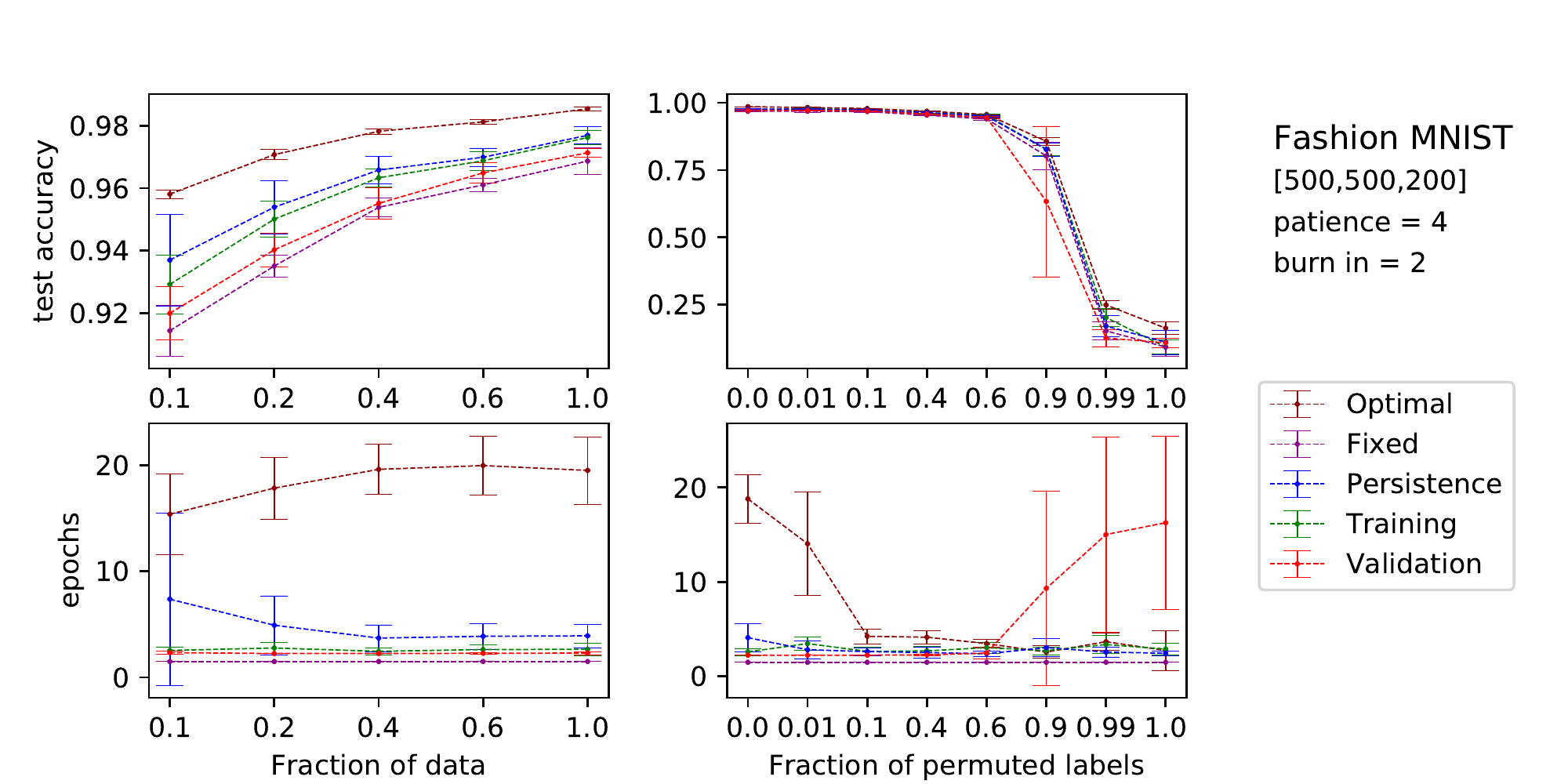}
\includegraphics[width=\textwidth]{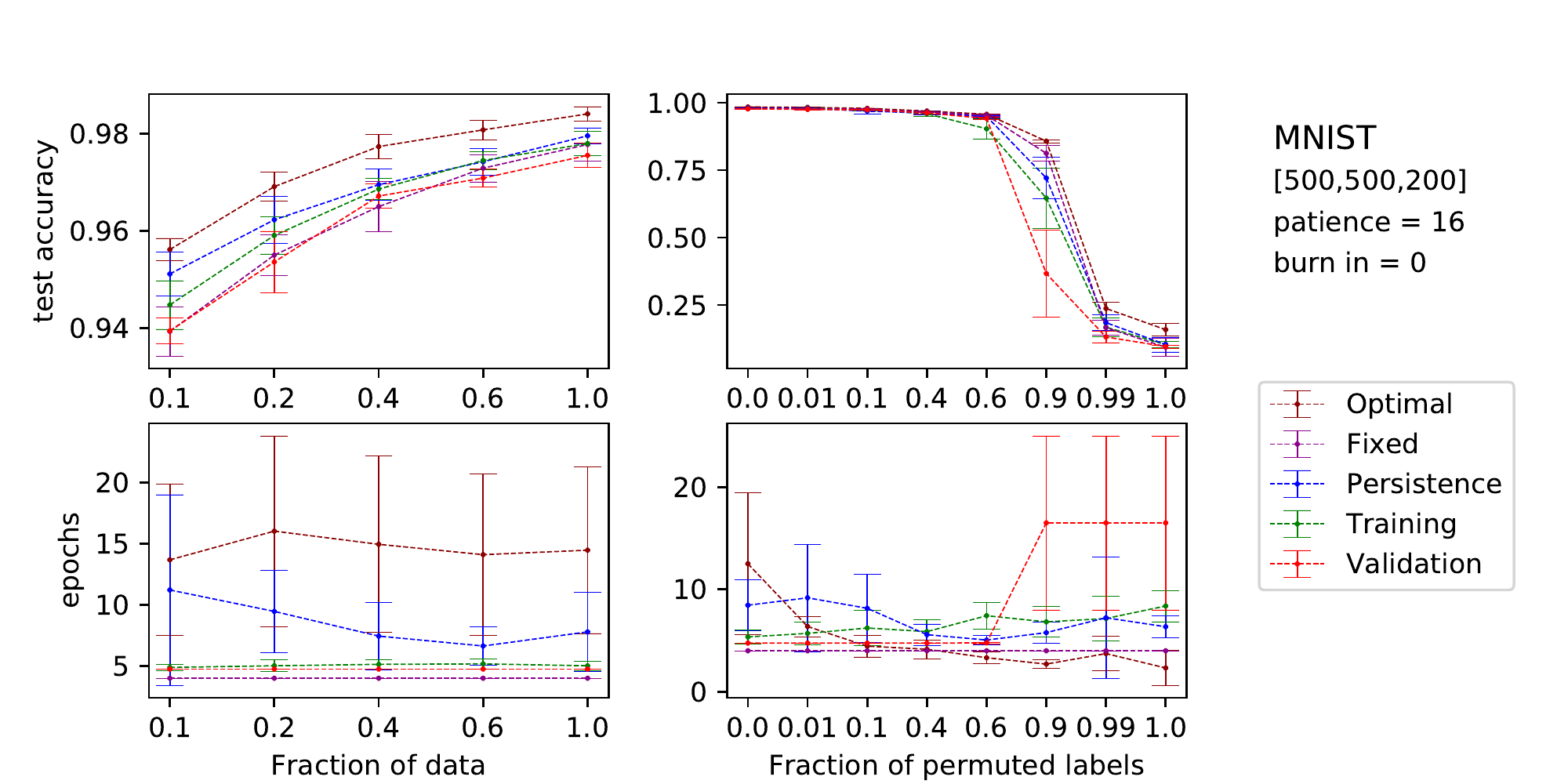}
\caption{On MNIST and Fashion-MNIST $\meanneuralpersistence$ (in blue) stops later than validation and training loss when fewer training samples are available (left-hand side) which results in a higher test accuracy. For increasing noise in the training labels (right-hand side), the stopping of $\meanneuralpersistence$ remains stable, in contrast to the validation loss stopping, which leads to lower test accuracy after longer training at a high fraction of permuted labels. The patience and burn in parameters are reported in quarter epochs.}\label{fig:Fraction}
\end{figure}

Figure \ref{fig:Fraction} shows the results averaged over $10$ runs (the error is the standard deviation). 
The difference between the top and the bottom panel is the data set and the patience parameters.
The $x$-axis depicts the fraction of the data set, which is warped for better accessibility. 
In each panel, the left-hand side subplots depict the results of the reduced data set experiment where the right-hand side subplots depict the result of the permutation experiments.
The $y$-axis of the top subplot shows the accuracy on the non-reduced, non-permuted test set. The $y$-axis of the bottom subplot shows when the stopping criterion was triggered.

We note the following observations, which hold for both panels:
More, non-permuted data yields higher test accuracy. Also, as expected, the optimal stopping gives the highest test accuracy.
The fixed early stopping results in inferior test accuracy when only a fraction of the data is available.
The neural persistence based stopping is triggered late when only a fraction of the data is available which results in a slightly better test accuracy compared to training and validation loss.
The training loss stopping achieves similar test accuracies compared to the persistence based stopping (for all regimes except the very small data set) with shorter training, on average.
We note that, it is generally not advisable to use training loss as a measure for stopping because the stability of this criterion also depends on the batch size.
When only a fraction of the data is available, the validation loss based stopping stops on average after the same number of training epochs as the training loss, which results in inferior test accuracy because the network has seen in total fewer training samples. 
Most strikingly, validation loss based stopping is is triggered later (sometimes never) when most training and validation labels are randomly permuted which results in overfitting and poor test accuracy.

To conclude, the neural persistence based stopping achieves good performance without being affected by the batch size and noisy labels. The authors also note that the result is consistent for multiple architectures and most patience parameters.


\begin{sidewaystable}[tbp]
		\caption{%
			Parameters and hyperparameters for the experiment on best practices and neural persistence. Dropout and batch normalization were applied after the first hidden layer. Throughout the networks, \emph{ReLU} was the activation function of choice.
		}
		\label{tab:best}
		\begin{tabular}{lllllll}
			\toprule
			Data set				& \# Runs & \# Epochs & Architecture & Optimizer                   & Batch Size & Hyperparameters  \\
			\midrule
	\multirow{3}{*}{MNIST}    & \multirow{3}{*}{50}	& \multirow{3}{*}{40} & \multirow{3}{*}{$[650, 650]$} & \multirow{3}{*}{Adam} & \multirow{3}{*}{32} & $\eta = 0.0003$ $\beta_1 = 0.9$, $\beta_2 = 0.999$, $\epsilon = \num{1e-8}$\\
			& 	& 	&  &  &  & $\eta = 0.0003$ $\beta_1 = 0.9$, $\beta_2 = 0.999$, $\epsilon = \num{1e-8}$, Batch Normalization\\
			& 	& 	&  &  &  & $\eta = 0.0003$ $\beta_1 = 0.9$, $\beta_2 = 0.999$, $\epsilon = \num{1e-8}$, Dropout 50\%\\
	    
			\bottomrule
		\end{tabular}
	
	\bigskip\bigskip  


	  \caption{%
	    Parameters and hyperparameters for the experiment on early stopping. Throughout the networks, \emph{ReLU} was the activation function of choice.
	  }
	  \label{tab:early}
	  \begin{tabular}{lllllll}
	    \toprule
	    Data set				& \# Runs & \# Epochs & Architecture & Optimizer                   & Batch Size & Hyperparameters  \\
	    \midrule
	    \multirow{3}{*}{(Fashion-)MNIST} & \multirow{3}{*}{100}	& 10	& Perceptron & Minibatch SGD & 100 & $\eta = 0.5$\\
	    & 	& \multirow{3}{*}{40} & $[50,50,20]$ & \multirow{3}{*}{Adam} & \multirow{3}{*}{32} & \multirow{3}{*}{$\eta = 0.0003$ $\beta_1 = 0.9$, $\beta_2 = 0.999$, $\epsilon = \num{1e-8}$}\\
	    & 	& 	& $[300,100]$ &  &  & \\
	    & 	& 	& $[20,20,20]$ &  &  & \\
	    \midrule
	    CIFAR-10 & 10 & 80 & $[800,300,800]$ & Adam & 128 & $\eta = 0.0003$ $\beta_1 = 0.9$, $\beta_2 = 0.999$, $\epsilon = \num{1e-8}$ \\
      \midrule
	    IMDB & 5 & 25 & $[128,64,16]$ & Adam & 128 & $\eta = \num{1e-5}$ $\beta_1 = 0.9$, $\beta_2 = 0.999$, $\epsilon = \num{1e-8}$ \\
	    \bottomrule
	  \end{tabular}
\end{sidewaystable}

\clearpage


\subsection{Testing accuracy of differently regularized models}

We showed in the main text that neural persistence is capable of
distinguishing between networks trained with/without batch normalization
and/or dropout. Figure~\ref{fig:folkwisdom} additionally shows test set
accuracies.

\begin{figure}[tbp]
  \centering
  \iffinal
    \includegraphics{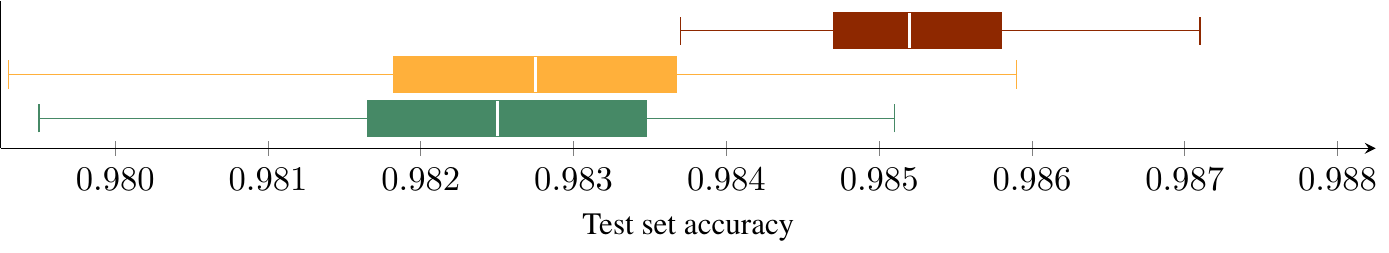}
  \else
    \input{figures/tikz/figure9.tex}
  \fi
  \caption{%
     Comparison of test set accuracy for trained
     networks without modifications~(\textcolor{printable_1}{green}), with batch
     normalization~(\textcolor{printable_2}{yellow}), and with 50\% of the neurons
     dropped out during training~(\textcolor{printable_3}{red}) for the MNIST data
     set.
  }
  \label{fig:folkwisdom}
\end{figure}

\end{document}